\newcommand*{\addFileDependency}[1]{
  \typeout{(#1)}
  \@addtofilelist{#1}
  \IfFileExists{#1}{}{\typeout{No file #1.}}
}
\newcommand{\excise}[1]{}
\newcommand\RR{\mathbb{R}}
\newcommand\PP{\mathbb{P}}
\newcommand\EE{\mathbb{E}}
\newcommand{\PD}{\mathrm{PD}}
\newtheorem{theorem}{Theorem}
\newtheorem{definition}{Definition}
\newtheorem{lemma}{Lemma}
\newtheorem*{example*}{Example}
\newtheorem{corollary}{Corollary}
\DeclareMathOperator\eig{eig}
\DeclarePairedDelimiter\ceil{\lceil}{\rceil}
\DeclarePairedDelimiterX{\infdivx}[2]{(}{)}{%
	#1\;\delimsize\|\;#2%
}
\newcommand{\RNum}[1]{\uppercase\expandafter{\romannumeral #1\relax}}
\let\ams@underbrace=\underbrace
\def\underbrace#1_#2{%
  \setbox0=\hbox{$\displaystyle#1$}%
  \ams@underbrace{#1}_{\parbox[t]{\the\wd0}{\centering#2}}%
}
\title{On the Identifiability and Interpretability of Gaussian Process Models}
\author{%
  Jiawen Chen \\
  Department of Biostatistics\\
  University of North Carolina at Chapel Hill\\
  \texttt{jiawenn@email.unc.edu} \\
  \And
  Wancen Mu \\
  Department of Biostatistics\\
  University of North Carolina at Chapel Hill\\
  \texttt{wancen@live.unc.edu} \\
  \AND
  Yun Li \\
  Department of Biostatistics\\
  University of North Carolina at Chapel Hill\\
  \texttt{yun\_li@med.unc.edu} \\
  \And
  Didong Li \\
  Department of Biostatistics\\
  University of North Carolina at Chapel Hill\\
  \texttt{didongli@unc.edu}\\
}
\begin{document}

\maketitle

\begin{abstract}
 In this paper, we critically examine the prevalent practice of using additive mixtures of Mat\'ern kernels in single-output Gaussian process (GP) models and explore the properties of multiplicative mixtures of Mat\'ern kernels for multi-output GP models. For the single-output case, we derive a series of theoretical results showing that the smoothness of a mixture of Mat\'ern kernels is determined by the least smooth component and that a GP with such a kernel is effectively equivalent to the least smooth kernel component. Furthermore, we demonstrate that none of the mixing weights or parameters within individual kernel components are identifiable. We then turn our attention to multi-output GP models and analyze the identifiability of the covariance matrix $A$ in the multiplicative kernel $K(x,y) = AK_0(x,y)$, where $K_0$ is a standard single output kernel such as Mat\'ern. We show that $A$ is identifiable up to a multiplicative constant, suggesting that multiplicative mixtures are well suited for multi-output tasks. Our findings are supported by extensive simulations and real applications for both single- and multi-output settings. This work provides insight into kernel selection and interpretation for GP models, emphasizing the importance of choosing appropriate kernel structures for different tasks.
\end{abstract}

\section{Introduction}

Gaussian processes (GPs) have emerged as a powerful and popular tool in machine learning, spatial statistics, functional data analysis, etc., due to their versatility, flexibility, and interpretability as a nonparametric method~\citep{RasmussenW06,banerjee2014hierarchical}. GPs provide an intuitive means of modeling uncertainty, allowing the generation of predictive distributions for unseen data points without requiring explicit model specification. Consequently, GPs have found applications in a variety of contexts. 

The key to harnessing the power of GPs lies in the choice of kernel functions, also known as covariance functions. Rather than specifying a model, the GP framework revolves around selecting an appropriate kernel, transforming the problem into one of parameter estimation. Over time, researchers have developed various kernels, each tailored to specific scenarios, such as spatial data, time series data, and others~\citep{genton2001classes}. For instance, for spatiotemporal, a variety of kernels have been developed to capture unique characteristics and patterns~\citep{gneiting2002nonseparable,stein2005space}. 

The Radial Basis Function (RBF) kernel and Mat\'ern kernels serve as prime examples of the diversity within the kernel family~\citep{cressie2015statistics}. The RBF kernel, renowned for its infinite differentiability, yields smooth functions, while Mat\'ern kernels control the degree of smoothness by a kernel parameter, thereby accommodating both smooth and nonsmooth functions (\cite{stein1999interpolation}). With each kernel bearing its own set of advantages and disadvantages, kernel selection, as a nontrivial task, often demands extensive domain knowledge.

Inspired by the potential of harnessing the strengths of multiple kernels concurrently, researchers have ventured into methods that involve kernel combinations, such as spectral mixtures~\citep{pmlr-v28-wilson13,samo2015generalized,remes2017nonstationary}, addition and/or multiplication of kernels~\citep{Duvenaud2011} such as mixture of RBF~\citep{duvenaud2013structure}, mixture of RBF, periodic (Per), linear (Lin), and rational quadratic (RQ)~\citep{kronberger2013evolution}, mixture of RBF, RQ, Matérn and Per~\citep{Cheng2019,Verma2020}, mixture of RBF, Matérn 1/2, 3/2, 5/2, as well as more sophisticated methods like Neural Kernel Networks~(NKN,\cite{sun2018differentiable} and Automatic Bayesian Covariance Discovery~(ABCD, \cite{lloyd2014automatic}. For instance, within the realm of spectral mixtures, \cite{remes2017nonstationary} proposed the Generalized Spectral Mixture (GSM) kernel, which is a product of three components all parametrized by GPs, representing frequencies,length-scales and mixture weights. In the category of summation and/or multiplication, \cite{Verma2020} utilized the sum of the RBF kernel and Mat\'ern kernel with varying smoothness as the final kernel in their t-distribution Gaussian process latent variable model (tGPLVM), an extension to GPLVM~\citep{gplvm,lalchand2022generalised}, to characterize the latent features in single-cell RNA sequencing data. 

In addition, methodological advances have facilitated the efficient discovery of optimal kernel mixtures. For example, NKN is reported to be more efficient than the Automatic Statistician, a gradient-based method. \cite{kernel_transformer} utilized a transformer-based framework to generate mixture kernel recommendations. These studies underscore the advantages of employing mixed kernels in GP modeling, showcasing improved model fitting and more accurate predictions. By combining multiple kernels, the strengths of individual kernels can be leveraged to capture complex data patterns and relationships, potentially exceeding the capabilities of single kernels. Furthermore, the automatic selection of kernels optimizes model performance, reducing the reliance on extensive prior knowledge.

Another essential benefit of using mixture kernels in GP modeling lies in the improved interpretability. In contrast to complex, data-driven kernels, mixture kernels allow the decomposition of intricate patterns into simpler, distinct base kernels that are more readily interpretable. This technique, often termed decomposition, enables a more comprehensive understanding of the underlying data structure by simplifying complex patterns into their constitutive components. An illustrative example of this approach is the work of \cite{duvenaud2013structure}, who applied this decomposition technique for structure discovery in time series data. Their proposed mixture kernel comprised several base kernels, including RBF, periodic, linear, and rational quadratic kernels, enabling the dissection of time-series data patterns into components such as long-term trends, annual periodicity, and medium-term deviations. In a similar vein, the ABCD approach also employed decomposition, albeit with a different set of base kernels.

The identifiability of parameters within a single Matérn kernel has previously been explored, with the microergodic parameter uniquely identified as the only identifiable parameter, as outlined in \cite{stein1999interpolation}. \cite{tang2022identifiability} examined the identifiablity of parameters within a single Mat\'ern kernel with nuggets. Despite the prevalent use and interpretation of mixture kernels, their theoretical properties including identifiability and interpretability of parameters within kernel components, to the best of our knowledge, remain underexplored. A related critical question pertains to the common practice of using kernels with varying degrees of smoothness for enhanced flexibility. In this work, we turn our attention to the additive kernel in univariate GPs and multiplicative (separable) kernel in multivariate GPs, specifically focusing on the mixture of the widely-used Mat\'ern kernel. We highlight the following novel findings:

\begin{itemize}
\item The smoothness of an additive mixture kernel is completely determined by the least smooth component. 
\item For additive mixture of Mat\'ern kernels, the identifiability is confined only to a single parameter, also known as the microergodic parameter that is associated with the least smooth kernel component. 
\item For multivariate GPs with multiplicative separable kernels, the multiplicative matrix that controls the correlation structure among the response variables is identifiable up to a multiplicative constant. 
\item Our conclusions extend beyond the specific case of Mat\'ern kernels, demonstrating applicability to a wider range of mixture kernels. 
\end{itemize}

Our study aims to deepen the understanding of mixture kernel identifiability and interpretability, as well as to clarify the practical benefits of using kernels with varying degrees of smoothness. Our theoretical assertions are supported by both simulations and real-world applications. Details regarding the proofs of our theories and numerical experiments can be found in the Appendix.

\section{Gaussian process, kernels and parameter inference}
This section serves to define key terms and outline the parameter inference algorithm integral to the forthcoming theory and simulation sections. 

A GP is a random function where any finite set of its realizations follows a multivariate normal distribution, characterized by a mean function $\mu$ and a covariance function $K$~\citep{RasmussenW06}.
\begin{definition}
$f$ is said to follow a Gaussian process in domain $\Omega$ with a mean function $\mu:\Omega \rightarrow \mathbb{R}$ and a covariance function $K: \Omega \times \Omega \rightarrow \mathbb{R}$ if for any $x_1,\dots,x_n \in \Omega$, 
$$
[f(x_1),\dots,f(x_n)]^\top\sim N(v,\Sigma),~\text{where }v=[\mu(x_1),\dots,\mu(x_n)]^\top,~\Sigma_{ij}=K(x_i,x_j).
$$
\end{definition}

For the purpose of this study, we assume $\mu = 0$, adhering to common practice and for the sake of simplicity. If the mean is not zero, a data transformation can be applied to achieve this. This approach is often adopted in machine learning and statistical analysis to simplify calculations and comply with certain algorithmic requirements~\citep{RasmussenW06,murphy2012machine}.

As a consequence, the behavior of a GP is primarily determined by its kernel function. In this study, we assume the domain $\Omega=\RR^p$ and our focus is on the selection of kernels that are widely recognized and commonly used in machine learning.

\begin{definition}
The RBF kernel, also known as the squared exponential kernel of Gaussian kernel is given by:
$$K(x,x')=\sigma^2\exp(-\alpha \left\|x-x'\right\|^2).$$
\end{definition}
The parameter $\sigma^2$ is called the spatial variance or partial sill that controls the point-wise variance, while $\alpha$ is called the (length) scale, range, or decay that controls the spatial dependency. A key characteristic of RBF is its smoothness, defined as follows:
\begin{definition}\label{def:MSsmoothness}
    A Gaussian process $f$ is said to be mean-square continuous (MSC) if $\EE(f(x+h)-f(x))^2\xrightarrow{h\to0}0$ for any $x\in\RR^p$. $f$ is said to be mean-square differentiable (MSD) if $\lim_{h\to0}\frac{f(x+h)-f(x)}{h}$ exists in the mean-square topology, and the limiting process is called the derivative process of $f$, denoted by $f'$. Similarly, the $d$-times mean-square differentiable ($d$-MSD) GPs can be defined inductively.
\end{definition}
In particular, the above-defined RBF kernel is infinitely differentiable. However, oversmoothing can be problematic in prediction~\citep{stein1999interpolation}, leading to the popularity of the following flexible family of kernels that allow for varying smoothness:
\begin{definition}
The Mat\'ern kernel is given by
$$K(x,x')=\frac{\sigma^2 2\left(\frac{\alpha}{2}\left\|x-x'\right\|\right)^\nu}{\Gamma(\nu)}K_\nu (\alpha \left\|x-x'\right\|)$$
where $K_\nu$ is the modified Bessel function of the second kind. When $\nu=1/2$, the Mat\'ern kernel becomes the exponential kernel:
$$K(x,x')=\sigma^2 \exp(-\alpha \left\|x-x'\right\|)$$
\end{definition}
The additional parameter $\nu$ is called the smoothness, since the smoothness of a Mat\'ern GP is exactly $\ceil{\nu}-1$. Inferring $\nu$ is known to be a particularly challenging problem, both from the theoretical and empirical perspectives~\citep{zhang2004inconsistent}. In practice, $\nu$ is usually set to $1/2, 3/2,5/2,\cdots$ due to the simplified analytic form of the modified Bessel function. 

The most straightforward estimator of the parameters $\sigma^2$ and $\alpha$ is the maximum likelihood estimator (MLE). In practice, optimizers such as Adam~\citep{kingma2017adam} or stochastic gradient descent (SGD) are used to maximize the log-likelihood. 


\section{Univariate GP: additive kernels}
\label{sec:additive}
In this section, we investigate the smoothness of mixture kernels and the identifiability of parameters in the additive mixture of Mat\'ern kernels in the context of a univariate response variable. We note here that all the theorems presented are framed in an asymptotic context.

Consider $K_1,\cdots,K_L$ as $L$ kernels with $d_l$-MSD. Define $K=\sum_{l=1}^Lw_lK_l$ where $\sum_{l=1}^Lw_l = 1$ and $w_l\geq 0$. 
\begin{theorem}\label{thm:smoothness}
$K$ is $d\coloneqq \min_l \{d_l\}$-times MSD, but not $d+1$-times MSD, so the smoothness of $K$ is determined by the least smooth component.
\end{theorem}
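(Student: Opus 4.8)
The plan is to exploit the fact that mean-square differentiability is a property of the kernel alone, so that I am free to pick a convenient probabilistic realization. Since the sum of independent centered Gaussian processes has covariance equal to the sum of the individual covariances, I would realize a GP $f$ with kernel $K$ as $f=\sum_{l=1}^L\sqrt{w_l}\,f_l$, where $f_1,\dots,f_L$ are \emph{independent} with $f_l\sim\mathrm{GP}(0,K_l)$ (components with $w_l=0$ may be discarded, so assume $w_l>0$). Mean-square differentiation is a linear operation, so whenever the relevant derivative processes exist we have $f^{(k)}=\sum_l\sqrt{w_l}\,f_l^{(k)}$, and the $f_l^{(k)}$ remain independent and centered. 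This reduces both halves of the statement to a single structural fact about sums of independent processes.

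For the positive part, because $d=\min_l d_l\le d_l$ for every $l$, each $f_l$ is $d$-MSD, so each of its mean-square derivative processes up to order $d$ exists. A finite linear combination of $L^2$-convergent sequences converges in $L^2$ to the same linear combination of limits, so $f$ is $d$-MSD with $f^{(d)}=\sum_l\sqrt{w_l}\,f_l^{(d)}$. Equivalently, at the level of kernels, $\tfrac{\partial^{2d}}{\partial s^{d}\partial t^{d}}K=\sum_l w_l\,\tfrac{\partial^{2d}}{\partial s^{d}\partial t^{d}}K_l$ exists and is finite on the diagonal because each summand does.

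For the negative part I would show that the roughness of the least smooth components cannot cancel. Writing $g=f^{(d)}$, $g_l=f_l^{(d)}$, and $D_h=\tfrac{g(x+h)-g(x)}{h}$, independence and the zero mean kill the cross terms, giving
$$
\EE\big[D_h^2\big]=\sum_{l=1}^L w_l\,\EE\Big[\Big(\tfrac{g_l(x+h)-g_l(x)}{h}\Big)^2\Big],
$$
a sum of \emph{nonnegative} terms. Now $f$ is $(d+1)$-MSD iff $g$ is once MSD, which forces $\EE[D_h^2]$ to a finite limit as $h\to0$. For any $l$ with $d_l=d$ (one exists by definition of the minimum), $f_l$ is not $(d+1)$-MSD, i.e.\ $g_l$ is not MSD; for a stationary kernel with spectral measure $F_l$ this means $\int\norm{\omega}^{2(d+1)}\,dF_l(\omega)=\infty$, and since the integrand $\tfrac{2(1-\cos(\omega h))}{h^2}$ converges pointwise to $\norm{\omega}^{2}$, Fatou's lemma gives $\liminf_{h\to0}\EE[(\tfrac{g_l(x+h)-g_l(x)}{h})^2]\ge\int\norm{\omega}^{2(d+1)}\,dF_l(\omega)=\infty$. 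As $w_l>0$ and every term is nonnegative, $\EE[D_h^2]\to\infty$, so $g$ is not MSD and $K$ is not $(d+1)$-MSD.

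The main obstacle is exactly this non-cancellation step: a priori the $(d{+}1)$-st derivative could fail for several components while their singular contributions conspire to cancel in the sum. Independence is what removes the cross terms and recasts the obstruction as a sum of nonnegative quantities, and the spectral characterization $\int\norm{\omega}^{2m}\,dF<\infty$ (valid for the stationary Mat\'ern family, with the mixture's spectral measure $F=\sum_l w_l F_l$) certifies that each offending term is genuinely $+\infty$ rather than merely bounded and oscillating; the same characterization also handles $\Omega=\RR^p$ uniformly through the radial moment $\int\norm{\omega}^{2m}\,dF$. I would close by remarking that nothing here uses Mat\'ern beyond stationarity and the additivity of spectral moments, which is precisely why the conclusion extends to the broader class of mixture kernels.
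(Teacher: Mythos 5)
Your proposal is correct, but it takes a genuinely different route from the paper's. The paper works entirely in the spectral domain: it cites Stein's characterization ($K$ is $d$-MSD if and only if $\int\|\omega\|^{2d}\rho(\omega)\,\mathrm{d}\omega<\infty$), notes that the mixture's spectral density is the sum of the components' densities, proves the positive half by splitting the integral at $\|\omega\|=1$, and gets the negative half because the $2(d+1)$-th spectral moment of the sum dominates that of the roughest component, which is infinite. You instead realize the mixture probabilistically as $f=\sum_l\sqrt{w_l}f_l$ with independent $f_l\sim\mathrm{GP}(0,K_l)$: your positive half uses only linearity of $L^2$ limits, which is cleaner than the paper's integral splitting and requires no stationarity at all, so it holds for arbitrary kernels; your negative half uses independence to kill cross terms, so the difference-quotient variance becomes a sum of nonnegative terms, and Fatou plus the spectral moment condition certifies that the term from the roughest component diverges. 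Two remarks. First, your negative half does not actually dispense with the paper's lemma: you invoke one direction of the spectral characterization (not MSD $\Rightarrow$ infinite moment) for the offending component, and then essentially re-derive the converse direction, for the derivative of the mixture, via Fatou; the paper just applies the stated equivalence twice, which is shorter. Second, independence is a convenience rather than a necessity: $\EE[D_h^2]$ is a linear functional of the kernel and each $K_l$ is itself a covariance, so the decomposition into nonnegative terms already follows from linearity --- this is the exact probabilistic counterpart of the paper's ``spectral densities are nonnegative, so nothing can cancel.'' A small point in your favor: you explicitly discard components with $w_l=0$, which the negative half genuinely needs since the theorem's setup only assumes $w_l\geq 0$, whereas the paper's proof glosses over this (it even drops the weights from its spectral sum). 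Both you and the paper share the same mild informality about mean-square differentiability on $\RR^p$, treating the increment $h$ as scalar and working with radial moments rather than directional derivatives.
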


Beyond smoothness, we discuss the identifiability of the parameters in the mixture kernel, which relies on the notion of equivalence of measures.
\begin{definition}
    Two measures $\PP_1$ and $\PP_2$ are said to be equivalent, i.e., $\PP_1 \equiv \PP_2$, if they are absolutely continuous with respect to each other, i.e., $\PP_1(B)=0\Longleftrightarrow\PP_2(B)=0$. Two GPs are equivalent if the corresponding Gaussian random measures are equivalent. 
\end{definition}
As a consequence, two equivalent GPs cannot be distinguished by any finite number of realizations~\citep{stein1999interpolation}. Specifically, given a family of GPs parametrized by $\theta$, if $\PP_{\theta_1}\Longleftrightarrow\PP_{\theta_2}$ with $\theta_1 \neq \theta_2$, then $\theta$ is not identifiable since we cannot distinguish between $\theta_1$ and $\theta_2$. As a corollary, there does not exist any {\bf consistent} estimator for $\theta$.   

Now we can study the identifiability of the mixture Mat\'ern kernel $K=\sum_{l=1}^L w_lK_l$, where $K_l$ is the Mat\'ern kernel with parameters $(\sigma^2_l,\alpha_l,\nu_l)$, given known $\nu_l$'s in ascending order $\nu_1<\nu_2\cdots<\nu_L$ with $\nu_l-\nu_{l-1}\geq 1$ for different smoothness. 
\begin{theorem}\label{thm:iden_diff_nu}
Let $K=\sum_{l}w_lK_l$ and $\widetilde{K}=\sum_{l}\widetilde{w}_l\widetilde{K}_l$ be two kernels represented as linear combinations of Mat\'ern kernels $K_l$'s and $\widetilde{K}_l$ with parameters $(\sigma^2_l,\alpha_l,\nu_l)$ and $(\widetilde{\sigma}^2_l,\widetilde{\alpha}_l,\nu_l)$. 
\begin{enumerate}[(i)]
    \item When $p=1,2,3$, Then $ K\equiv \widetilde{K}$ if $w_1\sigma_1^2\alpha_1^{2\nu_1}=\widetilde{w}_1\widetilde{\sigma}_1^2\widetilde{\alpha}_1^{2\nu_1}.$
    \item When $p\geq 5$, Then $ K\equiv \widetilde{K}$ if $$w_1\sigma_1^2\alpha_1^{2\nu_1}=\widetilde{w}_1\widetilde{\sigma}_1^2\widetilde{\alpha}_1^{2\nu_1},~w_2\sigma^2_2\alpha_2^{2\nu_2}-\nu_1 w_1\sigma^2_1\alpha_1^{2(\nu_1+1)}=\widetilde{w}_2\widetilde{\sigma}^2_2\widetilde{\alpha}_2^{2\nu_2}- \nu_1 \widetilde{w}_1\widetilde{\sigma}^2_1\widetilde{\alpha}_1^{2(\nu_1+1)}.$$
\end{enumerate}
Consequently, no single parameter is identifiable. The only identifiable parameter when $p\leq 3$ is $w_1\sigma^2_1\alpha_1^{2\nu_1}$, known as the microergodic parameter~\citep{stein1999interpolation}.
\end{theorem}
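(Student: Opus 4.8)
The plan is to move to the spectral domain and invoke the classical sufficient condition for equivalence of two stationary zero-mean Gaussian measures on a bounded region of $\RR^p$ (Ibragimov--Rozanov; Stein 1999): if $f$ and $\widetilde f$ denote the spectral densities of $K$ and $\widetilde K$, and if there is a radius $c>0$ on which $f/\widetilde f$ is bounded above and below with
$$\int_{\|\omega\|>c}\left(\frac{f(\omega)-\widetilde f(\omega)}{\widetilde f(\omega)}\right)^2 d\omega<\infty,$$
then $K\equiv\widetilde K$. Accordingly the first step is to record that the Mat\'ern kernel with parameters $(\sigma^2,\alpha,\nu)$ has spectral density proportional to $\sigma^2\alpha^{2\nu}(\alpha^2+\|\omega\|^2)^{-(\nu+p/2)}$, with a proportionality constant depending only on $\nu$ and $p$, so that $f=\sum_l w_l f_l$ and $\widetilde f=\sum_l\widetilde w_l\widetilde f_l$ are finite sums of such terms and the whole question reduces to comparing their tails.

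The second step is a high-frequency expansion. Writing $t=\|\omega\|^2$ and expanding $(\alpha_l^2+t)^{-(\nu_l+p/2)}=t^{-(\nu_l+p/2)}\bigl(1-(\nu_l+p/2)\alpha_l^2 t^{-1}+O(t^{-2})\bigr)$, each component becomes a series in descending powers of $t$. Since $\nu_1<\cdots<\nu_L$, the least smooth component $l=1$ has the heaviest tail, so the leading term of $f$ is proportional to $w_1\sigma_1^2\alpha_1^{2\nu_1}\,t^{-(\nu_1+p/2)}$; the hypothesis $\nu_l-\nu_{l-1}\ge 1$ ensures that the first self-correction of component $1$, of order $t^{-(\nu_1+1+p/2)}$, is not overtaken by the leading terms of the components $l\ge 2$, so the expansion is cleanly ordered by powers of $t$. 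Matching leading coefficients is then exactly $w_1\sigma_1^2\alpha_1^{2\nu_1}=\widetilde w_1\widetilde\sigma_1^2\widetilde\alpha_1^{2\nu_1}$, and reading off the coefficient of $t^{-(\nu_1+1+p/2)}$ (into which, when $\nu_2=\nu_1+1$, the leading term of component $2$ also feeds) produces a combination of the form $w_2\sigma_2^2\alpha_2^{2\nu_2}-\nu_1 w_1\sigma_1^2\alpha_1^{2(\nu_1+1)}$, up to the fixed $\nu,p$-dependent normalizers; this is the second relation in (ii).

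The third step is the dimension count, which separates the two cases. If only the leading coefficients agree then $(f-\widetilde f)/\widetilde f=O(\|\omega\|^{-2})$, its square is $O(\|\omega\|^{-4})$, and the radial integrand behaves like $\|\omega\|^{p-5}$, which is integrable near infinity exactly when $p\le 3$ --- giving (i), with $p=4$ the logarithmically divergent borderline. Forcing the next coefficient to agree as well improves the decay to $O(\|\omega\|^{-4})$, whose square has radial integrand $\|\omega\|^{p-9}$ and so remains integrable in higher dimensions, yielding (ii) (with the same mechanism, one further matched coefficient per step, extending the admissible range as $p$ grows). The non-identifiability statement is then immediate: for fixed $\nu_l$ the equivalence conditions are only a couple of equalities among the many free parameters $\{w_l,\sigma_l^2,\alpha_l\}$, so any individual parameter can be perturbed while holding $w_1\sigma_1^2\alpha_1^{2\nu_1}$ (and, for $p\ge5$, the second combination) fixed, producing $\theta_1\neq\theta_2$ with $\PP_{\theta_1}\equiv\PP_{\theta_2}$ and hence no consistent estimator of any single parameter. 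Upgrading this to ``$w_1\sigma_1^2\alpha_1^{2\nu_1}$ is the only identifiable parameter'' for $p\le 3$ additionally requires the converse --- that distinct microergodic values give mutually singular measures --- which follows from the consistent estimability of $w_1\sigma_1^2\alpha_1^{2\nu_1}$, the mixture analogue of the single-Mat\'ern result.

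I expect the main obstacle to be the bookkeeping in the second step: carrying the expansion to the right order and tracking precisely which subleading contributions (the self-corrections of the $\nu_1$ component versus the leading terms of the $\nu_2$ component) collide at a given power of $t$ --- this is exactly where the gap condition $\nu_l-\nu_{l-1}\ge 1$ is used and where the combination in (ii) is forced --- together with verifying \emph{all} hypotheses of the equivalence criterion, not just the integral but the uniform two-sided boundedness of $f/\widetilde f$ at high frequencies, with full rigor.
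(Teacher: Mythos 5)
Your part (i) is, in substance, the paper's own proof: the same spectral equivalence criterion (the paper's Lemma~\ref{lem:int_test}, cited from Yadrenko and Zhang rather than Ibragimov--Rozanov), the same verification that $\|\omega\|^{2(\nu_1+p/2)}\rho(\omega)$ stays bounded away from $0$ and $\infty$, the same tail expansion giving relative error $\widetilde w_1\widetilde\sigma_1^2\widetilde\alpha_1^{2\nu_1}/(w_1\sigma_1^2\alpha_1^{2\nu_1})-1+O(\|\omega\|^{-2})$, and the same dimension count showing $\int_{\|\omega\|>1}\|\omega\|^{-4}\,\mathrm{d}\omega<\infty$ precisely when $p\le 3$. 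Your bookkeeping expectation in the second step is also correct: writing $C(\nu,p)=\Gamma(\nu+p/2)/(\Gamma(\nu)\pi^{p/2})$ for the Mat\'ern spectral constant, the identity $(\nu_1+p/2)\,C(\nu_1,p)=\nu_1\,C(\nu_1+1,p)$ makes the coefficient at the second tail order proportional to exactly $w_2\sigma_2^2\alpha_2^{2\nu_2}-\nu_1 w_1\sigma_1^2\alpha_1^{2(\nu_1+1)}$, so the combination in (ii) does drop out of the spectral expansion as you predicted.

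The genuine gap is in part (ii). With the two combinations matched, the relative error is $O(\|\omega\|^{-4})$, its square is $O(\|\omega\|^{-8})$, and the radial integral $\int_1^\infty r^{p-1-8}\,\mathrm{d}r$ converges only when $p\le 7$; so your argument establishes (ii) for $p=5,6,7$ and fails for $p\ge 8$, where the theorem still asserts that the same two equalities suffice. Your proposed remedy --- ``one further matched coefficient per step, extending the admissible range as $p$ grows'' --- does not repair this: the third spectral coefficient contains the second self-correction of component $1$, proportional to $w_1\sigma_1^2\alpha_1^{2\nu_1}\cdot\alpha_1^4$, so once the microergodic parameter is fixed, matching it essentially forces $\alpha_1=\widetilde\alpha_1$. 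That is an additional hypothesis strictly stronger than the two stated in (ii), so you would be proving a different, weaker theorem. This saturation of the integral test is precisely why the paper abandons the spectral route for $p\ge5$ and switches tools: it expands the covariance in real space in principal irregular terms, $K_l(x+h,x')=\sigma_l^2G_{\nu_l}(|\alpha_l h|)-\nu_l\sigma_l^2G_{\nu_l+1}(|\alpha_l h|)+\iota_l(|\alpha_l h|)$, and invokes Theorem 4 of \cite{anderes2010consistent}, whose hypotheses ($0<2(2\nu_2-2\nu_1)<p$) are tailored to the high-dimensional regime; your proof would need to import that machinery (or restrict (ii) to $p\le 7$) to be complete. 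A smaller point: your final step, that distinct microergodic values give mutually singular measures, is asserted rather than proved, though the paper is equally brief there, deferring to \cite{stein1999interpolation}.
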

Interestingly, the case of $p=4$ is missing here. We hypothesize that it aligns with the case of $p\geq 5$. However, providing a proof remains challenging. It is also worth noting that the issue of identifiability persists even for a single Mat\'ern kernel, as highlighted in the existing literature~\citep{zhang2004inconsistent,anderes2010consistent,li2023inference}.
\begin{corollary}\label{cly:iden_diff_nu}
The mixture kernel $K$ is equivalent to $K_1$, that is, the mixture kernel is equivalent to its least smooth component. Furthermore, the mean squared error (MSE) of the mixture kernel $K$ is asymptotically equal to the MSE of $K_1$. 
\end{corollary}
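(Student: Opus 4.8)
The plan is to deduce both halves of the corollary from Theorem~\ref{thm:iden_diff_nu} together with the classical fixed-domain asymptotic theory of kriging under equivalent Gaussian measures. For the equivalence claim I would realize the least smooth (weighted) component as a degenerate one-term mixture and match it to $K$ using the microergodic conditions already in hand; for the MSE claim I would transfer the equivalence of measures into an equivalence of prediction errors via Stein's asymptotic-optimality results.

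First I would view a pure Mat\'ern kernel of smoothness $\nu_1$ as a mixture $\widetilde{K}=\sum_l \widetilde{w}_l\widetilde{K}_l$ with $\widetilde{w}_1=1$ and $\widetilde{w}_2=\cdots=\widetilde{w}_L=0$, so that $\widetilde{K}$ collapses to a single component with free parameters $(\widetilde{\sigma}_1^2,\widetilde{\alpha}_1)$. Theorem~\ref{thm:iden_diff_nu} then yields $K\equiv\widetilde{K}$ as soon as the microergodic conditions hold. When $p\le 3$ there is a single condition $w_1\sigma_1^2\alpha_1^{2\nu_1}=\widetilde{\sigma}_1^2\widetilde{\alpha}_1^{2\nu_1}$, satisfied by taking $\widetilde{\alpha}_1=\alpha_1$ and $\widetilde{\sigma}_1^2=w_1\sigma_1^2$; hence $\widetilde{K}=w_1K_1$ and $K$ is equivalent to its least smooth component with the weight absorbed. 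When $p\ge 5$ a second condition must also hold, and since $\widetilde{w}_2=0$ it reduces, after substituting the first, to $\widetilde{\alpha}_1^2=\alpha_1^2-w_2\sigma_2^2\alpha_2^{2\nu_2}/(\nu_1 w_1\sigma_1^2\alpha_1^{2\nu_1})$, with $\widetilde{\sigma}_1^2$ then fixed. Here the first point of care arises: this value is admissible (positive range) only when $\alpha_1$ is large enough relative to the second component, so the equivalence to a single Mat\'ern of smoothness $\nu_1$ holds with a range shrunk to absorb the next-order tail contribution, and is cleanest for $p\le 3$.

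For the MSE statement, the plan is to invoke the infill asymptotic efficiency theory: if two zero-mean Gaussian measures are equivalent, then the best linear predictor computed under one covariance is asymptotically optimal under the other, and the ratio of their mean squared prediction errors tends to one as the design becomes dense. Taking the law of the GP with kernel $K$ as the data-generating measure and the GP with kernel $w_1K_1$ (equivalently $\widetilde{K}$ above) as the working model, the equivalence from the first part forces the kriging MSE under the least smooth component to be asymptotically indistinguishable from that under $K$.

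I expect the main obstacle to lie in the MSE half rather than the equivalence half. The equivalence reduces to bookkeeping with the microergodic parameters, modulo the admissibility check above, whereas the MSE claim rests on importing and correctly instantiating Stein's asymptotic-optimality results: I must verify their hypotheses for the mixture kernels in the relevant infill regime and make precise what ``the MSE of $K$'' denotes (pointwise prediction variance versus an integrated error), since Theorem~\ref{thm:iden_diff_nu} delivers only equivalence of measures and not, by itself, any statement about predictors.
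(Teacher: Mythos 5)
Your proposal is correct, and its equivalence half coincides with the paper's argument: the paper likewise absorbs the weight into the variance, treating the least smooth component as $\mathrm{Mat}(w_1\sigma_1^2,\alpha_1,\nu_1)$, and concludes $K\equiv K_1$ from Theorem~\ref{thm:iden_diff_nu} because the microergodic parameters $w_1\sigma_1^2\alpha_1^{2\nu_1}$ match; your framing of $w_1K_1$ as a degenerate mixture ($\widetilde{w}_1=1$, remaining weights zero) is the same computation. You are in fact more careful than the paper about dimension: the paper's proof silently invokes only part (i), so the corollary is really a $p\le 3$ statement, and your observation that for $p\ge 5$ the second microergodic condition forces $\widetilde{\alpha}_1^2=\alpha_1^2-w_2\sigma_2^2\alpha_2^{2\nu_2}/(\nu_1 w_1\sigma_1^2\alpha_1^{2\nu_1})$, which need not be positive, is a genuine point the paper does not address. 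Where you diverge is the MSE half. The paper does not pass through ``equivalence of measures implies asymptotically efficient prediction'' (Stein's 1988 theorem, which it never states); instead it invokes its Lemma~\ref{lemma:lemma_MSE} (Stein, 1993), whose hypothesis is a spectral-density ratio condition: taking $\rho_1$, the density of the weighted least smooth component, as the truth, the computation already inside the proof of Theorem~\ref{thm:iden_diff_nu} gives $\lim_{\|\omega\|\to\infty}\rho(\omega)/\rho_1(\omega)=1$, and the tail condition holds because $\rho_1$ is a single Mat\'ern density, so the ratio of prediction MSEs tends to one. Both routes are legitimate: yours is more modular, reusing the equivalence just proved, but requires importing an external asymptotic-optimality theorem and verifying its hypotheses, which is exactly the obstacle you flag; the paper's route sidesteps that concern entirely, since the only hypotheses to check are the ratio limit already computed and the tail behavior of a single Mat\'ern. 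Your question about what ``MSE'' denotes is resolved identically in both arguments: it is the pointwise error of the best linear predictor computed under the (possibly misspecified) kernel, evaluated under the true measure, along a dense sequence of design points.
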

Theorem~\ref{thm:iden_diff_nu} implies that interpreting $w_l$ as the weight of each component is often misleading due to its nonidentifiability. Furthermore, Corollary~\ref{cly:iden_diff_nu} suggests the use of a single Mat\'ern kernel in lieu of the linear mixture kernel. Both assertions are supported by the simulation study in Section~\ref{sec:sim} and the real-world application in Section~\ref{sec:application}.

An alternative strategy is to assume the same smoothness across mixing components, i.e., $\nu_1=\nu_2,\cdots,\nu_L$. In this case, we prove that no single parameter is identifiable. The complete theorem, proof, and simulation are in the Appendix.

When considering real-world data, it is often the case that the observed outcomes are noisy, which is frequently modeled as an i.i.d. Gaussian with variance denoted by $\tau^2$, commonly referred as the "nugget". The inclusion of such a noise term is essential to capture the inherent variability in the data. Equivalently, the model can be formulated as noiseless by adjusting the kernel to be $K(x,x')+\tau^2\mathbf{1}_{\{x=x'\}}$. The following corollary shows that the noise, or nugget, does not impact the identifiability and interpretability of the mixture of Mat\'ern kernels as shown in Theorem \ref{thm:iden_diff_nu}.

\begin{corollary}\label{cly:iden_diff_noise}
Let $\tau^2$ and $\widetilde{\tau}^2$ be the noise variance of $K$ and $\widetilde{K}$. If $\tau^2 \neq \widetilde{\tau}^2$, then $K \not \equiv \widetilde{K}$. If $\tau^2 = \widetilde{\tau}^2$, the previous results in Theorem~\ref{thm:iden_diff_nu} hold.
\end{corollary}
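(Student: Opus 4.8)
The plan is to treat the nugget model as a smooth Gaussian process corrupted by independent white noise, $Y=f+\epsilon$, and to argue at the level of the induced measures. Since white noise is not function-valued, I would fix once and for all a countable dense set $\{x_i\}_{i\ge1}\subset\Omega$ and work on the sequence space $\RR^\infty$: let $\mu,\widetilde\mu$ denote the laws of $(f(x_i))_i$ under the two smooth kernels $K,\widetilde K$, and let $\rho=\bigotimes_i N(0,\tau^2)$ and $\widetilde\rho=\bigotimes_i N(0,\widetilde\tau^2)$ be the noise laws, so that the laws of the noisy observations $(Y(x_i))_i$ are the convolutions $\mu*\rho$ and $\widetilde\mu*\widetilde\rho$. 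Because these are Gaussian measures, they obey the dichotomy of being either equivalent or mutually singular; hence for the first claim it suffices to exhibit a single event that has full probability under one law and zero probability under the other.

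For the first statement ($\tau^2\ne\widetilde\tau^2$), I would isolate the nugget by a quadratic-variation statistic. Choose disjoint pairs $(x_i,x_i')$ from the dense set with $\|x_i-x_i'\|\to0$ so fast that $K(x_i,x_i)-K(x_i,x_i')\le 2^{-i}$ (possible by mean-square continuity of the Mat\'ern mixture), and set $T_n=\tfrac1n\sum_{i=1}^n\tfrac12\bigl(Y(x_i)-Y(x_i')\bigr)^2$. Writing each increment as $[f(x_i)-f(x_i')]+[\epsilon_i-\epsilon_i']$, the noise increments are independent across $i$ with law $N(0,2\tau^2)$, so by the strong law their contribution converges to $\tau^2$ almost surely; the smooth increments contribute a summable, hence $o(n)$, nonnegative series by the choice above, and the cross term vanishes by Cauchy--Schwarz. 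Thus $T_n\to\tau^2$ $\PP_K$-a.s.\ and $T_n\to\widetilde\tau^2$ $\PP_{\widetilde K}$-a.s.; when $\tau^2\ne\widetilde\tau^2$ these are disjoint almost-sure events, giving mutual singularity and therefore $K\not\equiv\widetilde K$.

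For the second statement ($\tau^2=\widetilde\tau^2$, common noise $\rho$), the key is that convolution with a fixed measure preserves equivalence. Under the microergodic conditions of Theorem~\ref{thm:iden_diff_nu} the smooth processes are equivalent, and since equivalence is preserved under the measurable restriction to $\{x_i\}$ we have $\mu\equiv\widetilde\mu$ on $\RR^\infty$. I would then prove the lemma $\mu\equiv\widetilde\mu\Rightarrow\mu*\rho\equiv\widetilde\mu*\rho$ by Fubini: if $(\mu*\rho)(A)=0$ then $\int\mu(A-e)\,d\rho(e)=0$, so $\mu(A-e)=0$ for $\rho$-a.e.\ $e$; absolute continuity $\widetilde\mu\ll\mu$ gives $\widetilde\mu(A-e)=0$ for the same $e$, whence $(\widetilde\mu*\rho)(A)=\int\widetilde\mu(A-e)\,d\rho(e)=0$, and the reverse inclusion follows by symmetry. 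Consequently the equivalences established in Theorem~\ref{thm:iden_diff_nu} carry over verbatim to the noisy model, so the nugget leaves the identifiability conclusions intact.

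The main obstacle is conceptual rather than computational. It is tempting to add a flat white-noise spectrum to each Mat\'ern spectral density and re-run the spectral comparison underlying Theorem~\ref{thm:iden_diff_nu}; this is invalid, because white noise is not an $L^2$-process and the constant spectral floor would spuriously make \emph{every} pair of mixtures equivalent. Recognizing that the correct object is the convolution on the observation space, and that equivalence is governed there by the smooth parts alone, is the crux of the second statement. The delicate point in the first statement is upgrading the distributional identifiability of $\tau^2$ to an almost-sure separating event, which is what forces the fast-decay choice of the pair gaps together with the strong law. Finally, the convolution lemma yields only that a common nugget cannot \emph{destroy} the equivalences of Theorem~\ref{thm:iden_diff_nu}; the sharp converse---that it cannot \emph{create} new ones, so that the microergodic parameter remains the unique identifiable parameter for $p\le3$---additionally relies on the single-kernel microergodic estimability results cited in the text.
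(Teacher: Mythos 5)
Your argument is correct in substance but follows a genuinely different route from the paper. The paper's proof is a two-line application of a lemma quoted from Tang et al.\ (2022) (Lemma~\ref{lemma:equiv_nuggets} in the Appendix): with $m_1=m_2=0$, differing nuggets give orthogonal measures, and equal nuggets give equivalence of the noisy measures \emph{if and only if} the noiseless measures are equivalent, so Theorem~\ref{thm:iden_diff_nu} transfers immediately. You instead prove the needed facts from first principles: for $\tau^2\neq\widetilde\tau^2$ you build an almost-surely convergent quadratic-variation statistic $T_n$ whose limit is the nugget variance under each measure, yielding mutual singularity (this is essentially a self-contained proof of part 1 of the cited lemma; the summably-close pairs plus the strong law and Cauchy--Schwarz are sound, though you should impose the increment bound $K(x_i,x_i)-K(x_i,x_i')\le 2^{-i}$ under \emph{both} $K$ and $\widetilde K$, which costs nothing since both are mean-square continuous); for $\tau^2=\widetilde\tau^2$ your Fubini convolution lemma correctly shows that equivalence of the smooth parts implies equivalence of the noisy models, and your diagnosis of why the naive spectral shortcut (adding a flat noise floor to the densities and rerunning the integral test) would be fallacious is a genuine insight the paper does not articulate. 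What your route gives up is the converse direction in the equal-nugget case: the paper's cited lemma is an equivalence, so non-equivalence of the smooth measures also transfers to the noisy model, which is what keeps the microergodic parameter \emph{identifiable} (not merely the only surviving candidate) in the presence of noise. Your one-directional lemma suffices for every non-identifiability assertion in Theorem~\ref{thm:iden_diff_nu}, but, as you yourself flag in your closing paragraph, the positive identifiability half needs either the converse implication or the nugget-consistency results of Tang et al.; supplying that step (or simply invoking Lemma~\ref{lemma:equiv_nuggets}(2)) would make your proof fully match the strength of the paper's.
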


In essence, while the presence of noise or "nugget" captures real-world data variability, it does not obscure or alter the fundamental characteristics of the mixture of Matérn kernels as established in our theorems.

\section{Multivariate GP: multiplicative kernels}\label{sec:multi}
In this section, we shift our focus to multivariate GPs, also known as multi-output or multi-task GPs, which are applicable to datasets with multiple response variables. 

\begin{definition}
$f:\Omega\to \RR^m$ is said to follow a $m$-variate GP in domain $\Omega$ with zero mean and cross-covariance function $K:\Omega\times\Omega\to\PD(m)$, where $\PD(m)$ is the space of all $m$ by $m$ positive definite matrices, if for any $x_1,\cdots,x_n$:
$$[f(x_1),\cdots,f(x_n)]^\top\sim MN(0,\Sigma),$$
where $MN$ represents matrix Gaussian distribution and $\Sigma$ consists of $m\times m$ blocks $\Sigma_{ij}=K(x_i,x_j)$.
\end{definition}

The construction of valid cross-covariance kernels is a pivotal yet challenging aspect of multivariate GPs~\citep{gneiting2002nonseparable,apanasovich2010cross,genton2015cross}. A popular kernel that is widely used due to its simplicity and seemingly interpretability is the multiplicative kernel, also known as a separable kernel. This kernel admits the form $K(x,y) = AK_0(x,y)$ where $K_0$ is a standard kernel for univariate GP like Mat\'ern while $A\in \mathrm{PD}(m)$ is a positive definite matrix reflecting the correlation between response variables. The following theorem investigates the identifiability of the multiplicative cross-covariance kernel.  

\begin{theorem} \label{thm:multiplicative}
\begin{enumerate}[(i)]
    \item     When $K_0$ is Mat\'ern with parameters $(\sigma^2,\alpha,\nu)$ where $\nu$ is given, then $K\equiv \widetilde{K}$ if $\sigma^2\alpha^{2\nu}A=\widetilde{\sigma}^2\widetilde{\alpha}^{2\nu}\widetilde{A}$. That is, the identifiable parameter is $\sigma^2\alpha^{2\nu}A$. Hence $A$ is identifiable up to a multiplicative constant and the correlation structure is identifiable.
    \item Let $K_0$ be an arbitrary kernel with spectral density $\rho_0$ satisfying:
    $$\exists \gamma\in\mathcal{W}_{[-b,b]^m},~0<b<\infty,~c_1,c_2>0,~s.t.,~c_1\gamma(\omega)^2\leq \rho_0(\omega)\leq c_2 \gamma(\omega)^2, $$
    where $\mathcal{W}_{[-b,b]^d}$ is the space of Fourier transforms of $L^2(\RR^p)$ functions with compact support $[-b,b]^d$ (see the Appendix for more details). If $\theta$ is a mircroergodic parameter of $K_0$, such that $\int_{\RR^p} \frac{1}{\gamma^4(\omega)}\left(\frac{\rho(\omega)}{\theta}-\frac{\widetilde{\rho}(\omega)}{\widetilde{\theta}}\right)^2\mathrm{d}\omega<0$ if $\theta=\widetilde{\theta}$, then $K\equiv \widetilde{K}$ if $\sigma^2\alpha^{2\nu}A=\widetilde{\sigma}^2\widetilde{\alpha}^{2\nu}\widetilde{A}$. That is, the identifiable parameter is $\theta A$, hence $A$ is identifiable up to a multiplicative constant and the correlation structure is identifiable. 
\end{enumerate}
\end{theorem}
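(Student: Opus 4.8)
The plan is to work entirely in the spectral domain and invoke the Feldman--H\'ajek dichotomy together with a matrix-valued generalization of Stein's spectral criterion for equivalence of stationary Gaussian fields. The multiplicative kernel $K(x,y)=AK_0(x,y)$ has matrix-valued spectral density $F(\omega)=\rho_0(\omega)\,A$, and the competitor has $\widetilde F(\omega)=\widetilde\rho_0(\omega)\,\widetilde A$, where $\rho_0,\widetilde\rho_0$ are the scalar spectral densities of the two base kernels and $A,\widetilde A\in\PD(m)$. The criterion I would use states that, on a bounded domain, the two vector Gaussian fields are equivalent provided $\widetilde F(\omega)$ is nonsingular and
\[
\int_{\RR^p}\norm{\widetilde F(\omega)^{-1/2}F(\omega)\widetilde F(\omega)^{-1/2}-I}_{\mathrm{HS}}^2\,\mathrm d\omega<\infty,
\]
where $\norm{\cdot}_{\mathrm{HS}}$ is the Hilbert--Schmidt norm; this is the natural vector-field analogue of the scalar condition $\int(\rho_0/\widetilde\rho_0-1)^2\,\mathrm d\omega<\infty$ used for single-output Mat\'ern kernels in \citep{stein1999interpolation}.

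The key algebraic reduction exploits separability. Set $B=\widetilde A^{-1/2}A\widetilde A^{-1/2}\in\PD(m)$, which is independent of $\omega$. Then $\widetilde F(\omega)^{-1/2}F(\omega)\widetilde F(\omega)^{-1/2}=\tfrac{\rho_0(\omega)}{\widetilde\rho_0(\omega)}B$, and diagonalizing $B=U\diag(\lambda_1,\dots,\lambda_m)U^\top$ with orthogonal $U$, the orthogonal invariance of the Hilbert--Schmidt norm collapses the integrand to a sum of scalar terms,
\[
\norm{\tfrac{\rho_0(\omega)}{\widetilde\rho_0(\omega)}B-I}_{\mathrm{HS}}^2=\sum_{j=1}^m\Bigl(\tfrac{\rho_0(\omega)}{\widetilde\rho_0(\omega)}\lambda_j-1\Bigr)^2.
\]
Writing $\rho_0=\theta\,(\rho_0/\theta)$ and $\widetilde\rho_0=\widetilde\theta\,(\widetilde\rho_0/\widetilde\theta)$, where $\theta$ is the microergodic parameter ($\theta=\sigma^2\alpha^{2\nu}$ in part (i)), the hypothesis $\theta A=\widetilde\theta\widetilde A$ is equivalent to $B=(\widetilde\theta/\theta)I$, i.e. $\lambda_j=\widetilde\theta/\theta$ for every $j$; each summand then becomes $\widetilde\theta^2\,\widetilde\rho_0(\omega)^{-2}\bigl(\rho_0(\omega)/\theta-\widetilde\rho_0(\omega)/\widetilde\theta\bigr)^2$. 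Using the sandwich $c_1\gamma(\omega)^2\le\widetilde\rho_0(\omega)\le c_2\gamma(\omega)^2$ to replace $\widetilde\rho_0^{-2}$ by a multiple of $\gamma^{-4}$, the integral is finite by exactly the assumed microergodicity condition $\int_{\RR^p}\gamma(\omega)^{-4}(\rho_0/\theta-\widetilde\rho_0/\widetilde\theta)^2\,\mathrm d\omega<\infty$, so equivalence follows. This proves sufficiency in both parts, with part (i) the specialization $\gamma(\omega)^2\asymp(\alpha^2+\norm{\omega}^2)^{-(\nu+p/2)}$.

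For the "up to a multiplicative constant" conclusion I would also run the reverse implication. If $\theta A\neq\widetilde\theta\widetilde A$, then some eigenvalue $\lambda_j\neq\widetilde\theta/\theta$; since $\rho_0/\widetilde\rho_0\to\theta/\widetilde\theta$ as $\norm{\omega}\to\infty$ (both densities share the same parameter-free normalized high-frequency shape, which is what makes $\theta$ microergodic), the corresponding summand tends to $(\tfrac{\theta}{\widetilde\theta}\lambda_j-1)^2>0$ and stays bounded away from $0$ at infinity, so the integral diverges and the Feldman--H\'ajek dichotomy forces mutual singularity, giving $K\not\equiv\widetilde K$. Combined with Stein's result that $\theta$ is the unique identifiable scalar parameter of a single $K_0$, this shows $\theta A$ — equivalently, $A$ up to the scalar $\theta$, i.e. the correlation structure — is exactly the identifiable object.

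The main obstacle I anticipate is justifying the matrix-valued equivalence criterion itself, rather than the subsequent linear algebra. The scalar Stein criterion is classical, but its vector-field form requires verifying the Feldman--H\'ajek conditions for the induced Gaussian measures on the bounded domain (so that the criterion is genuinely a statement about the tail behaviour of $F,\widetilde F$), controlling all $m$ spectral channels simultaneously through the single scalar factor $\rho_0/\widetilde\rho_0$, ensuring $\widetilde F^{-1/2}$ is well defined, and checking that the $\gamma$-sandwich transfers cleanly from $\widetilde\rho_0$ to the Hilbert--Schmidt integrand. Once this criterion is in hand, the separable structure makes $B$ constant in $\omega$ and the remaining argument is the routine diagonalization above.
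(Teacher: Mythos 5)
Your algebraic reduction is sound, and it in fact mirrors the computation in the paper's own proof: the paper likewise passes to the matrix-valued spectral densities $P=A\rho_0$ and $\widetilde P=\widetilde A\widetilde\rho_0$, factors the constant $(\theta A_{ij})^2$ out of the entrywise difference, and bounds what remains by the assumed scalar microergodicity integral, using the extreme eigenvalues of $A$ to transfer the $\gamma$-sandwich from $\rho_0$ to the matrix density (i.e., $c_1\eig_m(A)\gamma^2(\omega)\mathrm{I}_m\leq P(\omega)\leq c_2\eig_1(A)\gamma^2(\omega)\mathrm{I}_m$). The genuine gap is exactly the item you flag as your ``main obstacle'': the matrix-valued equivalence criterion you invoke --- that nonsingularity of $\widetilde F$ together with $\int_{\RR^p}\norm{\widetilde F(\omega)^{-1/2}F(\omega)\widetilde F(\omega)^{-1/2}-\mathrm{I}_m}_{\mathrm{HS}}^2\,\mathrm{d}\omega<\infty$ implies equivalence --- is not an established theorem, and establishing it is the entire difficulty of the result. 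Even in the scalar case the relative-difference integral (condition 2 of Lemma~\ref{lem:int_test}) is sufficient only in conjunction with a tail-regularity condition (condition 1); the multivariate analogue requires the sandwich $q_1\gamma^2\mathrm{I}_m\leq P\leq q_2\gamma^2\mathrm{I}_m$ as a \emph{hypothesis} of the equivalence theorem, not merely as a device for bounding the integrand. The paper closes precisely this gap by citation: part (i) is read off from Theorem 3 of \cite{bachoc2022asymptotically} (multivariate Mat\'ern), and part (ii) from Theorem 2 of \cite{bachoc2022asymptotically}, whose hypotheses (the matrix sandwich plus the entrywise condition $\int_{\RR^p}\gamma^{-4}(P_{ij}-\widetilde P_{ij})^2\,\mathrm{d}\omega<\infty$) are what the proof verifies. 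Without such a reference, or a proof of your Hilbert--Schmidt criterion, your argument assumes what needs to be proved.

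Your converse (``reverse implication'') has a further problem: for an arbitrary $K_0$ in part (ii), nothing in the hypotheses guarantees $\rho_0(\omega)/\widetilde\rho_0(\omega)\to\theta/\widetilde\theta$ as $\norm{\omega}\to\infty$; that limit is a Mat\'ern-specific fact, whereas here microergodicity of $\theta$ is encoded only through the integral condition. Moreover, divergence of a \emph{sufficient}-condition integral does not by itself force mutual singularity via Feldman--H\'ajek; you would need the criterion to be necessary as well. The paper obtains the claim ``the identifiable parameter is $\theta A$'' not by proving singularity directly but by concluding from Theorem 2 of \cite{bachoc2022asymptotically} that $\theta A$ is the microergodic parameter of $K$. (One small point in your favor: the theorem's displayed condition ``$<0$'' is a typo for ``$<\infty$'', and you correctly treated it as a finiteness requirement, as does the paper's proof.)
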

Note that (i) is a special case of (ii), as the Mat\'ern kernel satisfies the additional assumptions in (ii). This theorem positively supports the utilization of the multiplicative kernel and the interpretation that $A_{ij}$ measures the correlation between the $i$-th response variable and the $j$-th response variable.

\section{Simulation}\label{sec:sim}

In this section, we will validate the proposed theories by conducting three simulation studies. The first simulation demonstrates Theorem \ref{thm:smoothness} using a mixture of Mat\'ern kernels, highlighting that the smoothness of the sample path (e.g., a realization of the GP) is determined by the least smooth component. The second simulation supports Theorem \ref{thm:iden_diff_nu} using a mixture kernel of Mat\'ern kernels with varying smoothness. It shows that none of the single parameters are identifiable, and only the proposed microergodic parameter can be identified. Finally, the third simulation showcases Theorem \ref{thm:multiplicative}, suggesting that the multiplicative matrix is identifiable up to a multiplicative constant.

\subsection{Simulation 1 - Univariate GP: Smoothness}
In the first simulation, we show that the smoothness of the sample path is determined by the smoothness of the least smooth kernel. We generate 200 samples as an equidistant sequence ranging from $0-10$. We consider the following mixture kernel:
$K = \sum_{l=1}^3w_lK_l$, where $K_l$ is the Mat\'ern kernel with smoothness parameter $\nu_l$ being $\frac{2l-1}{2}$ and $\sum_{l=1}^Lw_l=1,~w_l\geq 0$. We randomly sample $Y$ from multivariate normal distribution with kernel function as the mixture kernel, Mat\'ern with $\nu=1/2,3/2,5/2$. The resulting sample paths are summarized in Figure \ref{sim-fig:smoothness}.

\begin{figure}[h]
  \centering
  \includegraphics[width=0.7\textwidth]{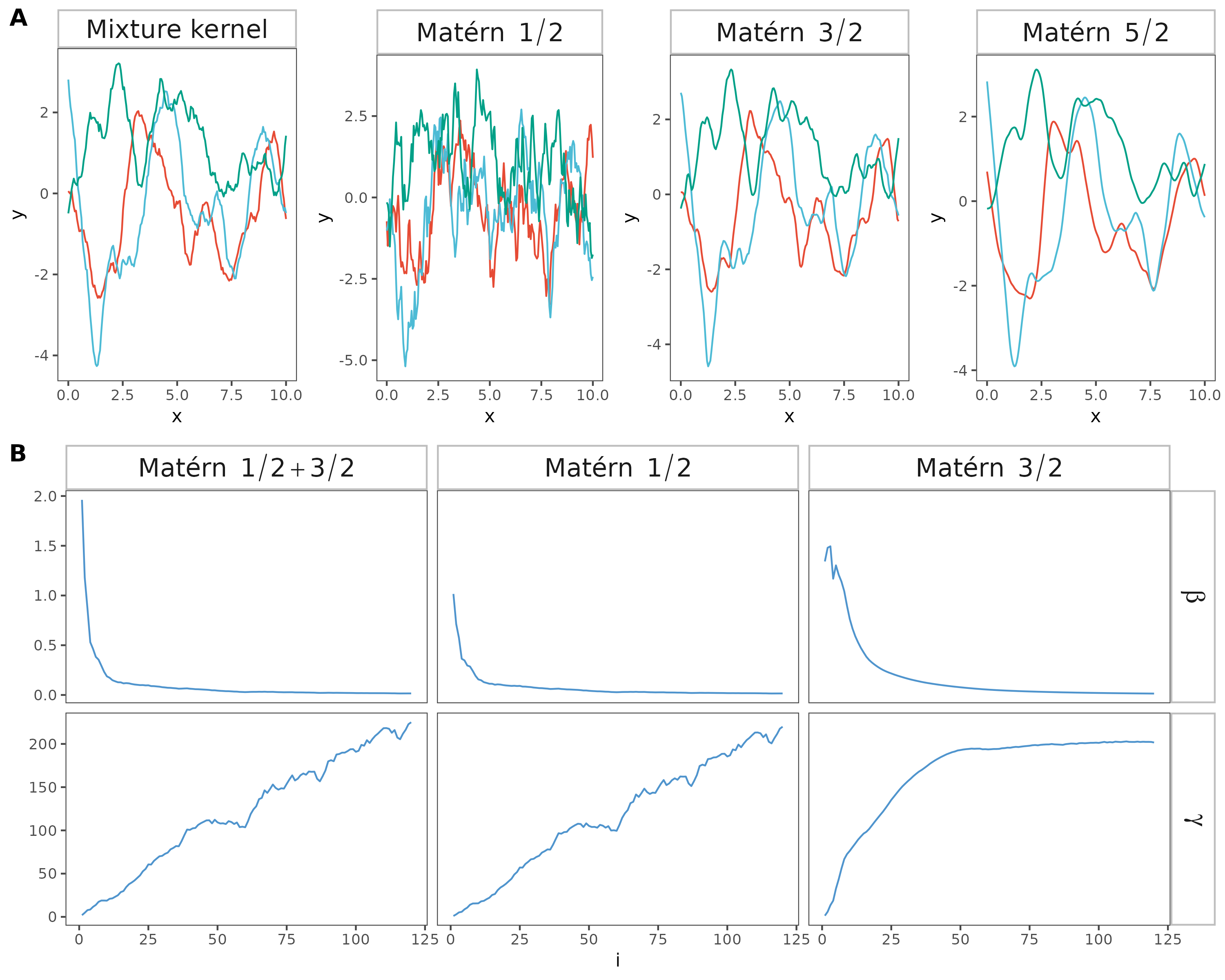}
  \caption{Smoothness of Mat\'ern kernels. Sample path of (A) Mixture kernel (B) Mat\'ern $1/2$ (C) Mat\'ern $3/2$ (D) Mat\'ern $5/2$. (E) Numerical examination of Theorem~\ref{thm:smoothness}.}
\label{sim-fig:smoothness}
\end{figure}

Our results clearly indicate that the Mat\'ern kernel with $\nu=1/2$ exhibits the least degree of smoothness (continuous but not differentiable), and the smoothness increases with $\nu$.  The smoothness of the mixture kernel is predominantly influenced by the Mat\'ern kernel with $\nu=1/2$. When comparing the sample path of the mixture kernel to those of the Mat\'ern kernels with $\nu=3/2$ and $\nu=5/2$, it is evident that the mixture kernel demonstrates a degree of smoothness similar to the Mat\'ern kernels with $\nu=1/2$. 

To better demonstrate the smoothness of the mixture kernel and its least component, we further examine the continuity and MSD of the mixture kernel and its mixing components empirically. For a fixed $x_0=0$, let $x_i=1/i$ for $i=1,2,\dots$, and we generate $y_i^l$ from the GP, where $l=1,\dots,T$ denotes the index of replicates. This allows us to approximate $\lim_{x\to 0}\mathbb{E}(f(x)-f(0))^2$ by $\beta_i=\frac{1}{T}\sum_{l=1}^T (y_i^l-y_0^l)^2$. As per Definition~\ref{def:MSsmoothness}, the GP is continuous if and only if $\beta_i\to0$. Similarly, we can approximate $\lim_{x\to 0}\mathbb{E}\left(\frac{f(x)-f(0)}{x}\right)^2$ by $\gamma_i=\frac{1}{T}\sum_{l=1}^T \left(\frac{y_i^l-y_0^l}{x_i}\right)^2$. The GP is mean-square differentiable if and only if $\lim_{i\to\infty}\gamma_i $ exists. Specifically, for the mixture of Matérn $1/2$ and $3/2$ and Matérn $1/2$, while $\beta_i\to0$, $\gamma_i$ does not converge (Figure~\ref{sim-fig:smoothness} first two columns), which indicates that both the mixture kernel and Mat\'ern $1/2$ are continuous but not differentiable. However, for Matérn $3/2$ (Figure~\ref{sim-fig:smoothness} third column), $\beta_i\to0$ and $\gamma_i$ converges, implying that Mat\'ern $3/2$ is continuous and differentiable. 

These empirical observations strongly support the claims made in Theorem~\ref{thm:smoothness}, which suggests that the inclusion of smoother kernel components in the mixture does not inherently enhance the overall smoothness of the mixture kernel. 

\subsection{Simulation 2 - Univariate GP: Components with different smoothness}

In the second simulation, our aim is to assess parameter identifiability in a GP with a mixture kernel consisting of Mat\'ern kernels with distinct smoothness. The mixture kernel is denoted as $K=\sum_{l=1}^3w_lK_l$, where $K_l$ is a Mat\'ern kernel with parameters $(\sigma_l^2,\alpha_l,\nu_l)$. For this simulation, we set $\nu_1=1/2,\nu_2=3/2,\nu_3=5/2$. Theorem~\ref{thm:iden_diff_nu} indicates that the only identifiable parameter is $w_l\sigma_l^2\alpha_l$, while all other parameters remain unidentifiable. In this scenario, the identifiable parameter, also known as the microergodic parameter, is associated with the least smooth component, i.e., the Mat\'ern $1/2$ kernel. We will assess parameter inference by comparing estimated values to their true values across different training sample sizes. For this experiment, we generate $n$ samples, ranging from ${20,50,100,500}$. For each sample size, we replicate the simulation 100 times. 

\begin{figure}[h]
  \centering
  \includegraphics[width=0.7\textwidth]{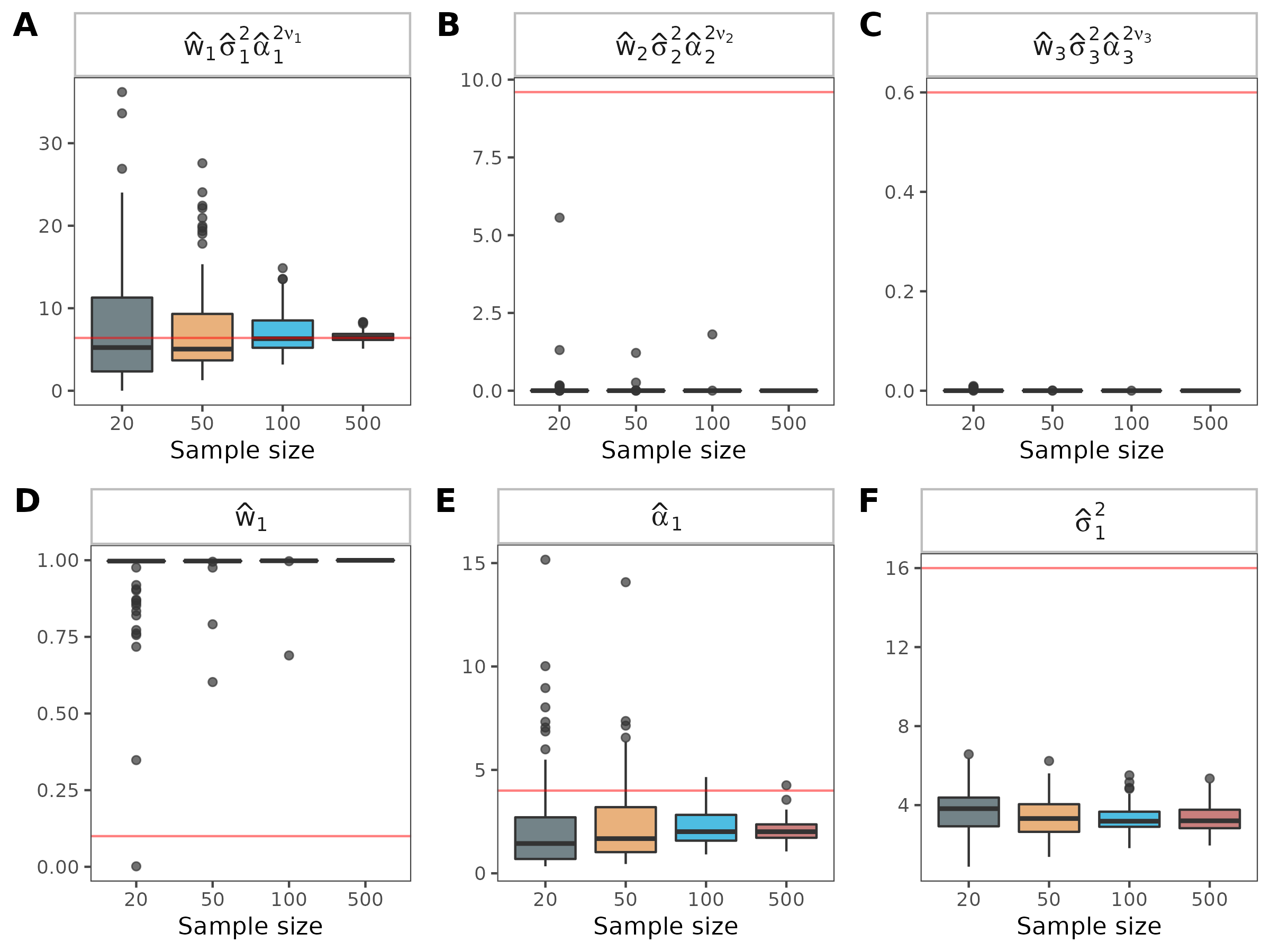}
  \caption{Parameter estimation. (A) Only the microergodic parameter $w_1\sigma_1^2\alpha_1$ is identifiable. (B-F) other parameters are not identifiable. The red line indicates the true value.}
  \label{sim3-fig}
\end{figure}

The results provide persuasive support for Theorem~\ref{thm:iden_diff_nu} that, in the context of additive mixture of Mat\'ern kernels with distinct smoothness, only the MLE of the microergodic parameter $w_1\sigma_1^2\alpha_1$ for the least smooth component converges to the true value. The MLEs for all other parameters do not converge to their respective true values, highlighting the unique identifiability of the parameters in the least smooth component within the additive mixture kernel framework. Such results are robust in terms of optimizer choice and consistent as the sample size increases to $3000$ (details in the Appendix).

\subsection{Simulation 3 - Multivariate GP}
Next, we turn our attention to multivariate GPs with a separable kernel. We define the separable kernel as $K=AK_0$. Here, we select Mat\'ern $1/2$ as $K_0$, characterized by the parameters $(\sigma^2,\alpha)$. Our Theorem~\ref{thm:multiplicative} suggests that only $A\alpha^{2\nu} \sigma^2$ is identifiable. To verify this, we adopt a bivariate setup where $A$ is a $2\times2$ positive definite matrix. In this simulation, the sample size $n$ varies from ${50,100,200,400}$. For each sample size, we replicate the simulation 100 times. 

\begin{figure}[h]
  \centering
  \includegraphics[width=0.7\textwidth]{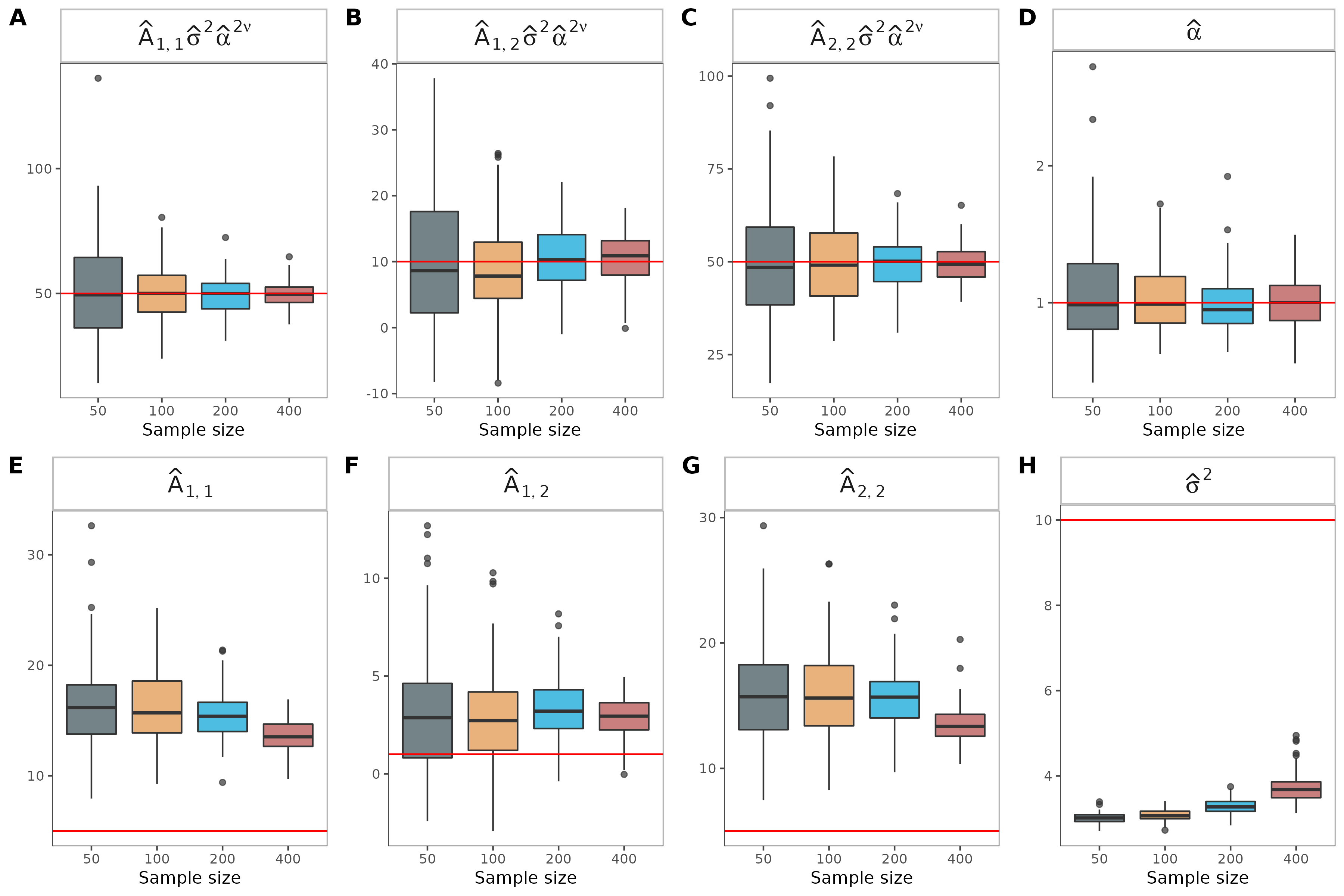}
  \caption{Parameter estimation. (A-C) Only the microergodic parameter $A\sigma^2\alpha^{2\nu}$ is identifiable (D-E) All other parameters are not identifiable. The red dotted line indicates the true value.}
  \label{sim4-fig}
\end{figure}

The parameter estimates are summarized in Figure \ref{sim4-fig}. Evidently, only the MLE of the microergodic parameter $A\sigma^2\alpha^{2\nu}$ converges to its true value. These findings reinforce the identifiability and interpretability of the separable kernel. This understanding is critical for the interpretation and application of separable kernels in real-world situations.

\section{Application}\label{sec:application}

Motivated by the findings in our theoretical and simulation studies, we proceed to compare the performance of the mixture kernel with varying smoothness and the least smooth component within that mixture only, in real-world data applications. In this section, we will revisit several applications in previous studies (\cite{RasmussenW06}, \cite{Wilson2014_NIPS}, \cite{sun2018differentiable}), illustrating that the mixture kernel achieves prediction performance comparable to the least smooth kernel component within the mixture.

\subsection{Application 1 - Image prediction}

For our first real-data application, we delve into the domain of image analysis. GP has been widely used to predict missing image (\cite{Wilson2014_NIPS}, \cite{sun2018differentiable}). This task of image prediction is an exemplary context for assessing kernel performance in discerning local correlation patterns, providing a visually intuitive framework for comparative analysis. 


We employ a handwritten zero image from the MNIST dataset (\cite{lecun-mnist}) for this analysis; we extract a $20\times20$ section from the center of the original $100\times100$ image to serve as our test image. This results in a training dataset of $9600$ pixels and a testing dataset with $400$ pixels~(Figure \ref{ap1-fig}A). The input for this analysis comprises the 2-D pixel locations, with the corresponding pixel intensities serving as the response. To reconstruct the missing area, we train GP regression models with both the mixture kernel with varying smoothness and a single Mat\'ern $1/2$ kernel on the training dataset, and subsequently make predictions on the testing dataset. Specifically, we employ a mixture of three Mat\'ern kernels of smoothness $1/2,~3/2,~5/2$. Our analysis seeks to demonstrate the similarities in performance between the mixture kernel and the least smooth component within that mixture, i.e., Mat\'ern $1/2$, in terms of prediction in the missing image area.

\begin{figure}[h]
  \centering
  \includegraphics[width=0.7\textwidth]{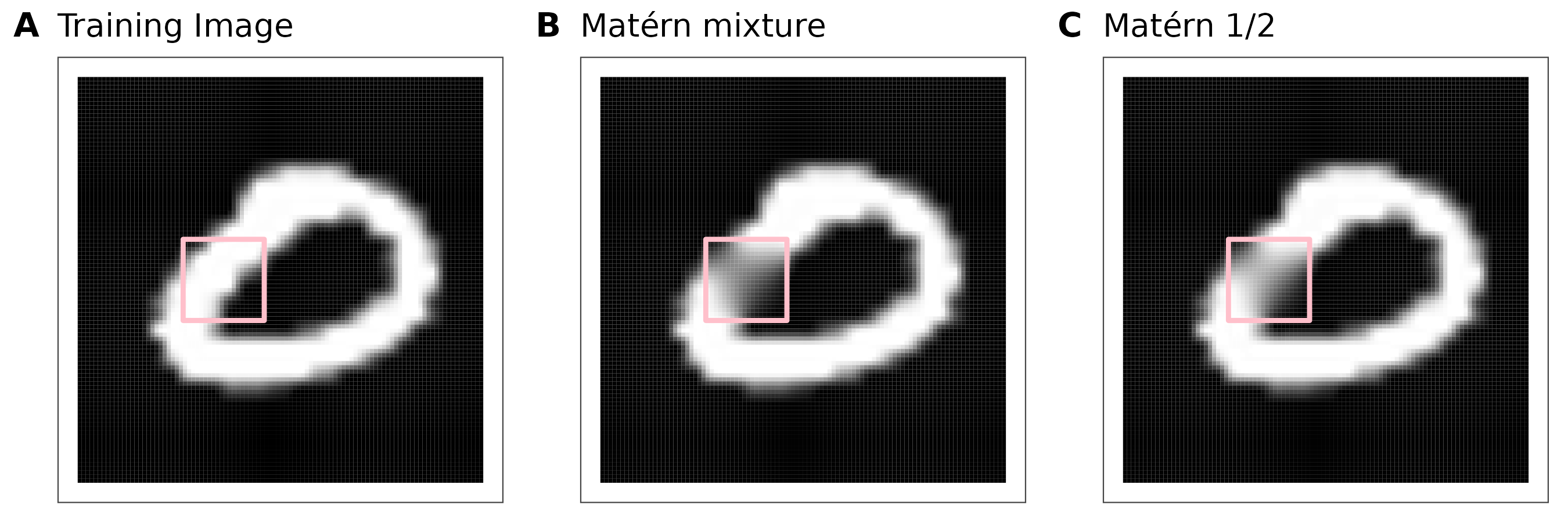}
  \caption{Texture exploration in diamond plate. (A) Training image. (B) Prediction of the mixture kernel. The area in the pink box represents the testing image area. (C) Predication of the Mat\'ern $1/2$ kernel.}
  \label{ap1-fig}
\end{figure}

The predictions generated by both the mixture kernel and the Mat\'ern 1/2 kernel demonstrate a remarkable similarity, as shown in Figure \ref{ap1-fig}. 
This observation suggests that the mixture kernel and a single Mat\'ern $1/2$ kernel have comparable performance. The close alignment between their predictions implies that using the mixture kernel does not yield significant advantages over the least smooth kernel component in this specific application, further reinforcing our theoretical findings (Theorem~\ref{thm:iden_diff_nu}) regarding the dominance of the least smooth kernel component in the mixture kernel.

\subsection{Application 2 - Manua Loa CO$_2$}

In this section, we extend our comparative analysis to the widely recognized Moana Loa CO$_2$ dataset from the Global Monitoring Laboratory's Repository as our benchmark for regression analysis~(\cite{mauna_loa}). \cite{RasmussenW06} employed this dataset to demonstrate that constructing a complex covariance function by combining several different types of simple covariance functions could yield an excellent fit to complex data. Consequently, we consider it to be a suitable choice for testing our theorem.

Specifically, we used the decimal date from 1960 to 2020 as our input and the monthly average CO$_2$ as our response. In this context, we compare the performance of Mat\'ern mixture $1/2+3/2,~1/2+3/2+5/2,~3/2+5/2$ and three single Mat\'ern kernels ($1/2,~3/2,~5/2$). The dataset consists of 720 records. We undertook an extensive assessment, varying the training sample size from $5\%$ to $95\%$ and keeping the test sample size as the remaining data. For each training sample size, we conducted 10 simulations using different random splits of the dataset as replications. This strategy allows for a robust and thorough evaluation of the prediction accuracy of both the mixture kernel and the single kernel component in the context of a regression problem with varying training sample sizes. We use the MSE to quantify the prediction accuracy (Figure~\ref{ap2-fig}).

\begin{figure}[h]
  \centering
  \includegraphics[width=\textwidth]{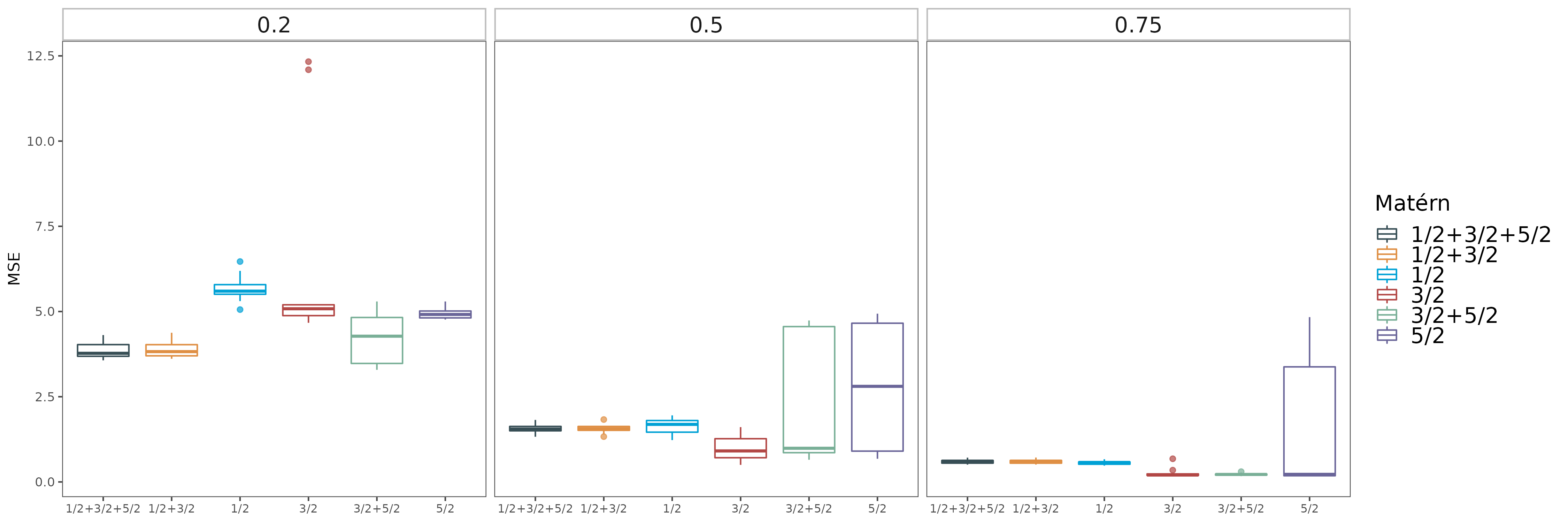}
  \caption{Manua Loa CO$_2$ prediction MSE in test dataset with 20\%, 50\% and 75\% training dataset. Boxplot are colored by different Mat\'ern kernels.}
  \label{ap2-fig}
\end{figure}

The Mat\'ern mixture kernel $1/2+3/2+5/2$ aligned closely with $1/2+3/2$ across all experiments. Furthermore, these three Mat\'ern mixture kernels ($1/2+3/2+5/2$, $1/2+3/2$, $1/2$) converge when the training sample size ratio reaches $50\%$. This empirical finding supports the theoretical equivalence of these three GPs as posited in Theorem~\ref{thm:iden_diff_nu}. Similarly, the Mat\'ern mixture $3/2+5/2$ kernel and the standalone Mat\'ern $3/2$ kernel converge from a $65\%$ sample size, lending empirical weight to their theoretical equivalence and their equivalence in MSE as stated in Corollary~\ref{cly:iden_diff_nu}.

Mat\'ern $3/2$ kernel has exceptional performance across all experiments. However, its influential performance is diluted when coupled with the less smooth component, Mat\'ern $1/2$. On the contrary, when mixed with the smoother Mat\'ern $5/2$ component, the worse performance of the latter is overshadowed by Mat\'ern $3/2$. These behaviors lend empirical weight to our Theorem~\ref{thm:iden_diff_nu}, suggesting that in the asymptotic sense, the mixture kernel is dominated by its least smooth component.

A noteworthy trend is the inequivalance of the Mat\'ern mixture kernel and its least component with a limited training sample (training sample size $< 50\%$). We consider such scenario as the "finite" sample scenario in contrast to our focus on the infinite sample scenario, i.e. asymptotic theory. Theoretically, while the Mat\'ern mixture kernels $1/2+3/2+5/2$, $1/2+3/2$, and the standalone Mat\'ern $1/2$ kernel are deemed equivalent, and the Mat\'ern mixture kernel $3/2+5/2$ is akin to the Mat\'ern $3/2$ kernel, their practical agreement points, i.e. $50\%$ and $65\%$, differed with limited training samples. Although securing a conclusive theoretical support for the finite sample regime is challenging, we managed to provide some insights to interpret our findings. To do so, we have to delve further into the proof of \cite{stein_1993_stats_prob_letter}'s Theorem 1, which guides us to Theorem 3.1 in \cite{stein_1990_asym_linear_pred}. The proof suggests that the relative difference between the MSEs rests on the tail of the series. This difference is influenced by both the sample size (denoted as $N$ in \cite{stein_1990_asym_linear_pred}) and by $b_{jk}$ and $\mu_j$, as defined on the same page. For a fixed sample size, smaller values of $(b_{jk}+\mu_j\mu_k)^2$ result in a smaller relative difference between MSEs. By the definition of $b_{jk}$ and $\mu_j$, the more "different" the two kernels are, in other words, the more "significant" the additional smoother components become — the greater the difference between the MSEs will be.

\section{Discussion and future work}
In this study, we delved into the identifiability and interpretability of parameters in GPs with various mixture kernels in the asymptotic scenario and fixed domain, including the additive kernel and separable kernels. We formulated a series of theorems clarifying the identifiable parameters in these kernel structures, and further corroborated our theorems through multiple simulations and real-data applications. Our simulation results convincingly demonstrated that in GPs with mixture kernels, the only identifiable parameter, known as the microergodic parameter, is associated with the least smooth kernel component. This discovery has profound implications for parameter interpretation in the context of GPs with mixture kernels. Empirical evidence from image data analysis and Manua Loa CO$_2$ regression studies further reinforce these discoveries. Despite the inclusion of kernels with varying smoothness in the mixture kernel, the performance of the mixture kernel closely paralleled that of the least smooth kernel component within it when the sample size is large enough, regardless of its performance. This observation suggests that the inclusion of kernels with different smoothness does not necessarily improve the prediction accuracy. In fact, due to various real world factors including limited training samples, optimization and more, determining a clear winner in performance between Mat\'ern mixture kernel and single Mat\'ern kernel proves to be a challenging task. Lastly, in the case of multivariate GPs with separable kernels, our theoretical and simulation results show that the correlation structure is identifiable up to a multiplicative constant. This result underlines that the interpretability of separable kernels mainly resides in the relative correlation structure, rather than individual parameters.


Although our study has provided substantial insight on the identifiability and interpretability of parameters in GPs with various mixture kernel types, it also opens several exciting avenues for future research. First, our analysis primarily focuses on Mat\'ern kernels, and it would be intriguing to extend this framework to other families of kernels, such as the periodic kernel. Such an extension would provide a more comprehensive understanding of parameter identifiability and predictive performance across a broader spectrum of kernel types. Second, our work has so far considered the cases where $p\leq 3$ or $p\geq 5$. However, extending this to $p=4$ presents a substantial challenge due to the lack of mathematical tools to determine whether two Gaussian random measures are equivalent or not. Third, our observations in the Mauna Loa CO$_2$ example underscore the need for further exploration in the finite sample scenario. Potential avenues for future research include investigating the convergence rates of both the Mat\'ern mixture kernel and the single Mat\'ern kernel. Lastly, while we have identified the microergodic parameters that are theoretically identifiable in mixture kernels, an important direction for future work involves finding consistent estimators for these kernel parameters. This would entail developing novel estimation techniques or adapting existing ones to reliably estimate the parameters in practice, thereby enhancing the practical utility of our theoretical findings. These endeavors will not only extend the theoretical foundations of GPs with mixture kernels, but will also broaden their applicability across various real-world scenarios.

\acksection
We express our sincere gratitude to anonymous reviewers and the Area Chair for their insightful comments and constructive feedback, which significantly enhanced the quality and clarity of this work. DL was supported by the NIH/NCATS award UL1 TR002489, NIH/NHLBI award R01 HL149683 and NIH/NIEHS award P30 ES010126. YL was partially supported by NIH awards R01AG079291 and U01HG011720. All the codes for replicating the analysis and parameter estimates are available at: \url{https://github.com/JiawenChenn/GP_mixture_kernel}.

\newpage

\setcounter{figure}{0}
\renewcommand{\thefigure}{S\arabic{figure}}

\newpage
\appendix

\section{Proof of Theorem \ref{thm:smoothness}}

We first state a Lemma for determining the smoothness for a stationary GP. 
\begin{lemma}[\cite{stein1999interpolation}]\label{lemma:smoothness}
Let $\rho$ be the spectral density of a GP with kernel $K$, then $K$ is d-times mean squared differentiable if and only if
$$\int \|\omega\|^{2d}\rho(\omega)\mathrm{d}\omega<\infty.$$
\end{lemma}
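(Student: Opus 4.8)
The plan is to reduce mean-square differentiability entirely to an integrability condition in the spectral domain via Bochner's theorem. Since the lemma presupposes a spectral density, $K$ is stationary, so I write $K(x,y)=C(x-y)$ with $C(h)=\int_{\RR^p}e^{i\omega^\top h}\rho(\omega)\,d\omega$ and $\rho\ge 0$. I would first establish the case of a single first-order partial derivative in a coordinate direction $e_j$, and then bootstrap to arbitrary order $d$ by induction, since the derivative process is itself a stationary GP.

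For the first-order step, fix $x$ and set $D_h=\frac{f(x+he_j)-f(x)}{h}$; the GP is MSD in direction $j$ iff $\{D_h\}$ is Cauchy in $L^2$ as $h\to 0$. Expanding $\EE[D_hD_{h'}]$ through $C$, substituting the spectral representation, and using the factorization $e^{i\omega_j(h-h')}-e^{i\omega_j h}-e^{-i\omega_j h'}+1=(e^{i\omega_j h}-1)(e^{-i\omega_j h'}-1)$ together with the symmetry $\rho(\omega)=\rho(-\omega)$ (which makes the integral real), I obtain the key identity
$$\EE\big[(D_h-D_{h'})^2\big]=\int_{\RR^p}\Big|\tfrac{e^{i\omega_j h}-1}{h}-\tfrac{e^{i\omega_j h'}-1}{h'}\Big|^2\rho(\omega)\,d\omega.$$
Writing $g_h(\omega)=\frac{e^{i\omega_j h}-1}{h}$, this is an isometry $\|D_h-D_{h'}\|_{L^2}=\|g_h-g_{h'}\|_{L^2(\rho)}$, so mean-square convergence of $D_h$ is equivalent to convergence of $g_h$ in $L^2(\rho)$. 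Since $g_h(\omega)\to i\omega_j$ pointwise with $|g_h(\omega)|\le|\omega_j|$, both directions follow: for ``if'', when $\int\omega_j^2\rho<\infty$ the domination $|g_h-i\omega_j|^2\le 4\omega_j^2\in L^1(\rho)$ gives $\|g_h-i\omega_j\|_{L^2(\rho)}\to 0$; for ``only if'', a Cauchy $\{g_h\}$ has an $L^2(\rho)$ limit agreeing a.e.\ (along a subsequence) with $i\omega_j$, forcing $\int\omega_j^2\rho<\infty$. Taking all $j$, first-order MSD is equivalent to $\int\|\omega\|^2\rho<\infty$.

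To reach general $d$, I would note that $\partial f/\partial x_j$ is a stationary GP with covariance $-\partial^2 C/\partial h_j^2=\int\omega_j^2 e^{i\omega^\top h}\rho\,d\omega$, hence spectral density $\omega_j^2\rho(\omega)$. Iterating the first-order result, a mixed partial $\partial^\alpha f$ with $|\alpha|=d$ exists in mean square iff $\int\omega^{2\alpha}\rho<\infty$, so $d$-times MSD is equivalent to finiteness of all order-$2d$ spectral moments $\int\omega^{2\alpha}\rho$, $|\alpha|=d$. Finally, the multinomial expansion $\|\omega\|^{2d}=(\omega_1^2+\dots+\omega_p^2)^d$ exhibits $\|\omega\|^{2d}$ as a nonnegative combination of exactly these monomials $\omega^{2\alpha}$, while conversely $\omega^{2\alpha}\le\|\omega\|^{2d}$; therefore the whole family of moment conditions collapses to the single condition $\int\|\omega\|^{2d}\rho<\infty$, which is the claim.

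I expect the main obstacle to be the ``only if'' direction, namely deducing finiteness of the spectral moment from mean-square convergence of the difference quotients: this is precisely where the isometry between the Gaussian Hilbert space and $L^2(\rho)$, and the passage from $L^2(\rho)$-convergence to a.e.\ subsequential pointwise convergence, are indispensable. The remaining subtlety is the bookkeeping in the inductive step, which must account for mixed partial derivatives and not merely the pure directional ones.
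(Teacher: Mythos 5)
Your proof is correct, but there is nothing in the paper to compare it against line by line: the paper states Lemma~\ref{lemma:smoothness} as a known result, citing \cite{stein1999interpolation}, and never proves it — it is used purely as a black box in the proof of Theorem~\ref{thm:smoothness}, where it is applied to the mixture spectral density $\rho=\sum_l w_l\rho_l$. What you have written is the standard self-contained argument behind that citation: Bochner's theorem, the isometry $\|D_h-D_{h'}\|_{L^2}=\|g_h-g_{h'}\|_{L^2(\rho)}$ between difference quotients and the functions $g_h(\omega)=(e^{i\omega_j h}-1)/h$, dominated convergence (using $|g_h(\omega)|\le|\omega_j|$) for the ``if'' direction, the a.e.-subsequence identification of the $L^2(\rho)$ limit with $i\omega_j$ for ``only if'', induction via the fact that $\partial^\beta f$ is a stationary GP with spectral density $\omega^{2\beta}\rho(\omega)$, and the two-sided bound $\omega^{2\alpha}\le\|\omega\|^{2d}=\sum_{|\alpha|=d}\binom{d}{\alpha}\omega^{2\alpha}$ to collapse the family of mixed-moment conditions into the single stated condition. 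This buys the reader a proof the paper itself does not supply, at the cost of length; the paper's choice to cite Stein keeps its appendix focused on the genuinely new computations (the mixture spectral densities). The one step you should make explicit is in the induction: identifying the spectral density of $\partial^\beta f$ as $\omega^{2\beta}\rho$ requires differentiating the covariance under the integral sign, which is justified by finiteness of the order-$2|\beta|$ moments; in the ``if'' direction these lower-order moments must be obtained from the order-$2d$ condition via the pointwise bound $\omega^{2\beta}\le 1+\|\omega\|^{2d}$ together with $\int\rho<\infty$ (finite variance), since otherwise the iteration looks circular. That is a presentational gap, not a mathematical one.
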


\begin{proof}[Proof of Theorem \ref{thm:smoothness}]
Let $\rho_l$ be the spectral density of $K_l$, then the spectral density of $K$ is given by $\rho=\sum_{l=1}^L\rho_l$ by the linearity of Fourier transform. Then by the smoothness of $K_l$ and Lemma \ref{lemma:smoothness}, we have $\int \|\omega\|^{2d_l}\rho_l(\omega)\mathrm{d}\omega<\infty.$
As a result,
\begin{align*}
\int \|\omega\|^{2d}\rho(\omega)\mathrm{d}\omega&=\sum_{l=1}^L\int \|\omega\|^{2d} \rho_l(\omega)\mathrm{d}\omega = \sum_{l=1}^L\int_{\|\omega\|\leq 1} \|\omega\|^{2d} \rho_l(\omega)\mathrm{d}\omega + \sum_{l=1}^L\int_{\|\omega\|\geq 1} \|\omega\|^{2d} \rho_l(\omega)\mathrm{d}\omega\\
&\leq \sum_{l=1}^L\int_{\|\omega\|\leq 1} \rho_l(\omega)\mathrm{d}\omega+  \sum_{l=1}^L\int \|\omega\|^{2d_l} \rho_l(\omega)\mathrm{d}\omega<\infty,
\end{align*}
that is, $K$ is $d$-times MSD. Similarly, $K$ is not $d+1$-times MSD since
\begin{align*}
\int \|\omega\|^{2(d+1)}\rho(\omega)\mathrm{d}\omega&=\sum_{l=1}^L\int \|\omega\|^{2(d+1)} \rho_l(\omega)\mathrm{d}\omega \geq \int\|\omega\|^{2(d+1)} \rho_1(\omega)\mathrm{d}\omega=\infty.
\end{align*}
\end{proof}

\section{Proof of Theorem \ref{thm:iden_diff_nu}}
We first state a Lemma, also known as the integral test to determine whether two GPs are equivalent or not.
\begin{lemma}[\cite{IAdrenko1983SpectralTO}, \cite{zhang2004inconsistent}]\label{lem:int_test}
    Let $K_1$ and $K_2$ be the kernels of two GPs $P_1$ and $P_2$ with spectral densities $\rho_1$ and $\rho_2$. Then $P_1\equiv P_2$ if
    \begin{enumerate}
        \item There exists $r>0$ such that $\|\omega\|^r\rho_1(\omega)$ is bounded away from both zero and infinity as $\omega\to\infty$. 
        \item There exists $\delta>0$ such that $\int_{\|\omega\|>\delta} \left(\frac{\rho_2(\omega)-\rho_1(\omega)}{\rho_1(\omega)}\right)^2\mathrm{d}\omega<\infty$.
    \end{enumerate}
\end{lemma}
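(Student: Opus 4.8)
The plan is to derive this lemma from the Feldman--Hájek dichotomy together with its reproducing-kernel-Hilbert-space (Cameron--Martin) refinement, specialized to the spectral representation of a stationary field. Recall that for two centered Gaussian measures $P_1,P_2$ with covariance operators $R_1,R_2$, the Feldman--Hájek theorem asserts that they are either equivalent or mutually singular, and that equivalence holds precisely when (a) the Cameron--Martin spaces $H(R_1)$ and $H(R_2)$ coincide as sets, and (b) the operator $R_1^{-1/2}R_2R_1^{-1/2}-I$ is Hilbert--Schmidt on $\overline{H(R_1)}$. The task therefore reduces to reading off (a) and (b) from the two spectral hypotheses.

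Next I would pass to the spectral domain. By Bochner's theorem and stationarity, each covariance operator acts, through the Fourier transform, as multiplication by the spectral density $\rho_i$, and the Cameron--Martin space becomes the weighted space $\{g : \int |\hat g(\omega)|^2/\rho_i(\omega)\,\mathrm{d}\omega<\infty\}$. Hypothesis (1), that $\|\omega\|^r\rho_1(\omega)$ is bounded away from $0$ and $\infty$ at infinity, fixes the polynomial order $\rho_1(\omega)\asymp\|\omega\|^{-r}$, so that $H(R_1)$ is a fractional Sobolev-type space of order $r/2$; this order is exactly what one then matches against $\rho_2$ to obtain the space equality (a). Diagonalizing $R_1^{-1/2}R_2R_1^{-1/2}-I$ in the Fourier variable turns its spectrum into $(\rho_2(\omega)-\rho_1(\omega))/\rho_1(\omega)$, so its squared Hilbert--Schmidt norm is exactly $\int(\rho_2-\rho_1)^2/\rho_1^2\,\mathrm{d}\omega$; hypothesis (2) makes the high-frequency part finite, while the band $\|\omega\|\le\delta$ contributes only a finite-trace (hence Hilbert--Schmidt) piece, giving (b).

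The main obstacle is that the field is observed only on a bounded region $D\subset\RR^p$, so the Fourier transform does \emph{not} literally diagonalize the covariance operator relevant for equivalence: the governing Hilbert space is the one generated by $\{f(x):x\in D\}$, not $L^2(\RR^p)$, and the clean multiplication picture above is only a full-space surrogate. Making the reduction rigorous is the crux, and I would follow the standard device that equivalence of the restricted measures is controlled entirely by the high-frequency behavior of the spectral densities: the boundedness of $D$ smooths the low frequencies into a trace-class perturbation that never affects equivalence, so one may replace the true restricted operators by their spectral surrogates up to a Hilbert--Schmidt defect. A related delicate point is that hypothesis (2) supplies only $L^2$ (rather than uniform) control of the relative discrepancy, so the passage to the space-equality (a) must be extracted from the same machinery rather than from naive pointwise comparison. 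Establishing that the restriction map intertwines the two pictures with a trace-class error — the content of the Ibragimov--Rozanov / Yadrenko spectral theory developed in the cited references — is where the bulk of the analytic effort lies; once it is in place, the two hypotheses translate directly into (a) and (b) and the equivalence follows.
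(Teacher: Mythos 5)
The paper offers no proof of this lemma at all: it is imported verbatim as a known result of \cite{IAdrenko1983SpectralTO} and \cite{zhang2004inconsistent}, so the correct benchmark for your attempt is the Ibragimov--Rozanov/Yadrenko spectral theory that those sources develop. Your outline does identify the right architecture --- the Feldman--H\'ajek dichotomy, the Cameron--Martin characterization, and the passage to spectral densities --- and, to your credit, you correctly locate the crux in the bounded-domain restriction. But as a proof the attempt has a genuine gap precisely there: the step you describe as ``where the bulk of the analytic effort lies'' is the entire content of the lemma, and you defer it to the very references the paper cites rather than proving it. The lemma is nontrivial exactly because hypothesis (2) is an $L^2$ condition on the \emph{relative} discrepancy of spectral densities, and converting it into the Hilbert--Schmidt property of $R_1^{-1/2}R_2R_1^{-1/2}-I$ for the covariance operator \emph{restricted to observations on a bounded set} requires the smearing-in-frequency estimates (reproducing-kernel embeddings of band-limited functions, Paley--Wiener arguments) that constitute the proof in Ibragimov--Rozanov; nothing in your sketch replaces them.

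Two of your intermediate assertions are also false or unjustified as stated. First, the claim that diagonalizing in the Fourier variable makes the squared Hilbert--Schmidt norm \emph{exactly} $\int(\rho_2-\rho_1)^2/\rho_1^2\,\mathrm{d}\omega$ cannot be literally true: a multiplication operator on $L^2(\RR^p)$ is Hilbert--Schmidt only if it is zero, so on the full space the criterion would force $\rho_1=\rho_2$ a.e. It is only the restriction to a bounded domain that turns the multiplication picture into a genuinely compact perturbation, and that is the unproven step --- you flag this as a ``surrogate,'' but the surrogate computation is then doing no logical work. Second, the assertion that the low-frequency band $\|\omega\|\le\delta$ ``contributes only a finite-trace (hence Hilbert--Schmidt) piece, giving (b)'' conflates two different gauges: what is needed is Hilbert--Schmidt-ness of the perturbation \emph{conjugated by} $R_1^{-1/2}$, and trace-class perturbations of the covariance do not in general preserve equivalence (a rank-one perturbation $\phi\otimes\phi$ with $\phi$ outside the Cameron--Martin space already makes the measures singular). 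Hypothesis (1) controls $\rho_1$ only at infinity, so the insensitivity of equivalence-on-bounded-sets to low frequencies is itself a theorem requiring proof, not a remark. In short: the plan is the standard and correct one, but the attempt proves none of the three steps that make the lemma true, so it should be regarded as a roadmap to the cited literature rather than a proof.
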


\begin{proof}[Proof of Theorem \ref{thm:iden_diff_nu}]
Recall that the spectral densities of $K_l$ are
$$\rho_l(\omega) = \frac{\sigma_l^2\alpha_l^{2\nu_l}}{(\alpha_l^2+\|\omega\|^2)^{\nu_l+\frac{p}{2}}},~\rho(\omega)=\sum_{l=1}^L w_l\frac{\sigma_l^2\alpha_l^{2\nu_l}}{(\alpha_l^2+\|\omega\|^2)^{\nu+\frac{p}{2}}},~\widetilde{\rho}(\omega) = \sum_{l=1}^L\widetilde{\omega}_l\frac{\widetilde{\sigma}_l^2\widetilde{\alpha}_l^{2\nu_l}}{(\widetilde{\alpha}_l^2+\|\omega\|^2)^{\nu_l+\frac{p}{2}}}.$$
To apply the integral test, first, let $r = 2(\nu_1+\frac{p}{2})$, then

$$\rho(\omega)\|\omega\|^r=\sum_{l=1}^L w_l\frac{\sigma_l^2\alpha_l^{2\nu_l}\|\omega\|^r}{(\alpha_l^2+\|\omega\|^2)^{\nu_l+\frac{p}{2}}}\xrightarrow[]{\omega\to\infty}w_1\sigma_1^2\alpha_l^{2\nu_1},$$
that is, bounded away from both $0$ and $\infty$ as $\omega\to\infty$.
Then we check the relative difference between the spectral densities $\rho$ and $\widetilde{\rho}$. Observe that when,
\begin{align*}
    \rho(\omega) &=\sum_{l=1}^L w_l\frac{\sigma_l^2\alpha_l^{2\nu_l}}{(\alpha_l^2+\|\omega\|^2)^{\nu_l+\frac{p}{2}}}= \frac{\sum_{l=1}^L w_l\sigma_l^2\alpha_l^{2\nu_l}\Pi_{j\neq l}(\alpha_j^2+\|\omega\|^2)^{\nu_j+\frac{p}{2}}}{\Pi_{l=1}^L (\alpha_l^2+\|\omega\|^2)^{\nu_l+\frac{p}{2}}}\eqqcolon \frac{\sum_{l=1}^L w_l\sigma_l^2\alpha_l^{2\nu_l}p_l(\omega)}{\Pi_{l=1}^L (\alpha_l^2+\|\omega\|^2)^{\nu_l+\frac{p}{2}}},
\end{align*}
where $p_l\coloneqq\Pi_{j\neq l}(\alpha_j^2+\|\omega\|^2)^{\nu_j+\frac{p}{2}}$. Similarly,
$$ \widetilde{\rho}(\omega)= \frac{\sum_{l=1}^L \widetilde{w}_l\widetilde{\sigma}_l^2\widetilde{\alpha}_l^{2\nu}\widetilde{p}_l(\omega)}{\Pi_{l=1}^L (\widetilde{\alpha}_l^2+\|\omega\|^2)^{\nu_l+\frac{p}{2}}},$$
where $\widetilde{p}_l\coloneqq\Pi_{j\neq l}(\widetilde{\alpha}_j^2+\|\omega\|^2)^{\nu_j+\frac{p}{2}}$.
Observe that when $\|\omega\|\to\infty$,
\begin{align*}
    \frac{\Pi_{l=1}^L (\widetilde{\alpha}_l^2+\|\omega\|^2)^{\nu_l+\frac{p}{2}}}{\Pi_{l=1}^L (\alpha_l^2+\|\omega\|^2)^{\nu_l+\frac{p}{2}}}=\frac{\Pi_{l=1}^L\left(1+\frac{\widetilde{\alpha}_l^2}{\|\omega\|^2}\right)^{\nu_l+\frac{p}{2}}}{\Pi_{l=1}^L\left(1+\frac{\alpha_l^2}{\|\omega\|^2}\right)^{\nu_l+\frac{p}{2}}}=\frac{1+O(\|\omega\|^{-2})}{1+O(\|\omega\|^{-2})}=1+O(\|\omega\|^{-2}).
    \end{align*}
    Furthermore, let $\gamma_l = \sum_{j\neq l}(\nu_j+\frac{p}{2})$, then $p_l(\omega) = O(\|\omega\|^{2\gamma_l})$ where $\gamma_1> \gamma_2\cdots>\gamma_L$ and $\gamma_l-\gamma_{l-1}\geq 1$. As a result, the leading term that dominates $\sum_{l=1}^L w_l\sigma_l^2\alpha_l^{2\nu_l}p_l(\omega)$ is $w_1\sigma_1^2\alpha_1^{2\nu_1}p_1(\omega)$, as $\|\omega\|\to\infty$. Moreover, $\sum_{l=1}^L w_l\sigma_l^2\alpha_l^{2\nu_l}p_l(\omega)=\|\omega\|^{2\gamma_1}\left(w_1\sigma_1^2\alpha_1^{2\nu_1}+O(\|\omega\|^{-2})\right)$ since $2\gamma_l-2\gamma_1\geq 2,~\forall l>1$. 

Now we can analyze the relative difference:
\begin{align*}
\left|\frac{\rho(\omega)-\widetilde{\rho}(\omega)}{\rho(\omega)}\right|& = \left|\frac{\widetilde{\rho}(\omega)}{\rho(\omega)}-1\right| =\left|\frac{\sum_{l=1}^L \widetilde{w}_l\widetilde{\sigma}_l^2\widetilde{\alpha}_l^{2\nu_l}\widetilde{p}_l(\omega)}{\sum_{l=1}^L w_l\sigma_l^2\alpha_l^{2\nu_l}p_l(\omega)}\frac{\Pi_{l=1}^L (\widetilde{\alpha}_l^2+\|\omega\|^2)^{\nu_l+\frac{p}{2}}}{\Pi_{l=1}^L (\alpha_l^2+\|\omega\|^2)^{\nu_l+\frac{p}{2}}}-1\right|\\
&=\left|\frac{\sum_{l=1}^L \widetilde{w}_l\widetilde{\sigma}_l^2\widetilde{\alpha}_l^{2\nu_l}\widetilde{p}_l(\omega)}{\sum_{l=1}^L w_l\sigma_l^2\alpha_l^{2\nu_l}p_l(\omega)}(1+O(\|\omega\|^{-2}))-1\right|\\
& = \left|\frac{\|\omega\|^{2\gamma_1}(\widetilde{w}_1\widetilde{\sigma}_1^2\widetilde{\alpha}_1^{2\nu_1}+O(\|\omega\|^{-2})}{\|\omega\|^{2\gamma_1}({w}_1{\sigma}_1^2{\alpha}_1^{2\nu_1}+O(\|\omega\|^{-2})}(1+O(\|\omega\|^{-2}))-1\right|\\
&=\left|\frac{\widetilde{w}_1\widetilde{\sigma}_1^2\widetilde{\alpha}_1^{2\nu_1}}{w_1\sigma_1^2\alpha_1^{2\nu_1}}(1+O(\|\omega\|^{-2}))(1+O(\|\omega\|^{-2}))-1\right|\\
&=\left|\frac{\widetilde{w}_1\widetilde{\sigma}_1^2\widetilde{\alpha}_1^{2\nu_1}}{w_1\sigma_1^2\alpha_1^{2\nu_1}}-1+O(\|\omega\|^{-2})\right|.
\end{align*}
As a result, if $w_1\sigma_1^2\alpha_1^{2
\nu_1}=\widetilde{w}_1\widetilde{\sigma}_1^2\widetilde{\alpha}_1^{2
\nu_1}$, 

$$\int_{\omega\in\RR^p:\|\omega\|>1} \left|\frac{\rho(\omega)-\widetilde{\rho}(\omega)}{\rho(\omega)}\right|^2\mathrm{d}\omega\approx \int_{\omega\in\RR^p:\|\omega\|>1} \frac{1}{\|\omega\|^4}\mathrm{d}\omega <\infty,$$ when $p=1,2,3$, so $K\equiv \widetilde{K}$. That is, none of the parameters are identifiable, while the parameter that might be identifiable is $w_1\sigma_1^2\alpha_1^{2
\nu_1}$, also known as the microergodic parameter.

Now we turn to the case for $p\geq 5$. By \cite{anderes2010consistent}, it suffices to anaylize the principal irregular terms for $K$. For the spectral density of each individual kernel component, the principal irregular term is
$$G_{\nu_l}(\omega) = \frac{-\pi}{2^{2\nu_l}\sin(\nu_l\pi)\Gamma(\nu_l)\Gamma(\nu_l+1)}\omega^{2\nu_l}\eqqcolon C_l\omega^{2\nu_l}.$$
Furthermore, $$K_l(x+h,x')=\sigma_l^2G_{\nu_l}(|\alpha_lh|)-\nu_l \sigma_l^2 G_{\nu_l+1}(|\alpha_lh|)+\iota_l(|\alpha_lh|),$$
where $\iota_l(t)=p(|h|)+o(G_{\nu_l+1}(|h|))$ as $|h|\to0$ and $p$ is a polynomial with even degree.
By linearity, we have
\begin{align*}K(x+h,x')&= \sum_{l=1}^Lw_lK_l(x+h,x')\\
&=\sum_{l=1}^Lw_l\left(\sigma_l^2G_{\nu_l}(|\alpha_lh|)-\nu_l \sigma^2_l G_{\nu_l+1}(|\alpha_lh|)+\iota_l(t)\right)\\
& = w_1\sigma_1^2G_{\nu_1}(|\alpha_1h|)-w_1\nu_1\sigma_1^2G_{\nu_1+1}(|\alpha_1h|)+w_2\sigma_2^2G_{\nu_2}(|\alpha_2h|)+\iota(t),
\end{align*}
where $\iota(t) =p(|h|)+o(G_{\nu_2+1}(|h|))$ as $|h|\to0$. They by Theorem 4 of \cite{anderes2010consistent}, there exists consistent estimators for $w_1\sigma_1^2\alpha_1^{2\nu_1}$ and $w_2\sigma^2_2\alpha_2^{2\nu_2}-\nu_1w_1\sigma_1^2\alpha_1^{2(\nu_1+1)}$ when $0<2(2\nu_2-2\nu_1)<p$, that is $0<4<p$. This completes the proof. 
\end{proof}

\section{Proof of Corollary \ref{cly:iden_diff_nu}}
We first state a Lemma for comparing MSE of two best linear predictor with two spectral density.

\begin{lemma}[\cite{stein_1993_stats_prob_letter}]\label{lemma:lemma_MSE}
Suppose $Z$ is a zero mean stationary process in $\mathbb{R}^d$ and $x_1,x_2,\dots$ is a dense sequence of points in a bounded subset of $\mathbb{R}^d$. $x_0$ is a point in the bounded set but not the sequence. Let $\hat{Z}(x_0,n,\rho)$ be the best linear predictor of $Z(x_0)$ based on $Z(x_1),\dots, Z(x_n)$ assuming $\rho$ is the spectral density for $Z$. $e(x_0,n,\rho)\coloneqq Z(x_0)-\hat{Z}(x_0,n,\rho)$. If there exists c>0, such that 
$$
\lim_{|\omega|\to \infty} \frac{\rho_1(\omega)}{\rho_0(\omega)}=c
$$
and $\rho_0$ satisfies condition 1 in Lemma \ref{lem:int_test}, then
$$
\lim_{n \to \infty} \frac{\EE_{\rho_0}  e(x_0,n,\rho_1)^2 }{\EE_{\rho_0}  e(x_0,n,\rho_0)^2} = 1
$$
\end{lemma}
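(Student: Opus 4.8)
The plan is to work entirely in the spectral domain and to exhibit the target ratio as a sandwich between quantities that differ only through the high-frequency behavior of $\rho_1/\rho_0$. Writing $Z(x)=\int_{\RR^d}e^{i\omega\cdot x}\,dW(\omega)$ with $\EE|dW(\omega)|^2=\rho(\omega)\,d\omega$, every linear predictor of $Z(x_0)$ from $Z(x_1),\dots,Z(x_n)$ has residual $e=\int g(\omega)\,dW(\omega)$ with transfer function $g(\omega)=e^{i\omega\cdot x_0}-\sum_{j=1}^n c_j e^{i\omega\cdot x_j}$ ranging over a family $\mathcal{G}_n$, and under density $\rho$ its error variance is $\int|g|^2\rho$. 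Let $g_n^{(i)}\in\mathcal{G}_n$ minimize $\int|g|^2\rho_i$ and put $V_i^{(j)}(n)=\int|g_n^{(j)}|^2\rho_i$, so that the denominator is $A(n)=V_0^{(0)}(n)$ and the numerator is $B(n)=V_0^{(1)}(n)$. Optimality gives $V_i^{(i)}(n)\le V_i^{(k)}(n)$; in particular $A(n)\le B(n)$, so only an upper bound of the form $B(n)/A(n)\le 1+o(1)$ is needed.

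It is instructive to first record the idealized computation: if $\rho_1\equiv c\rho_0$ then $V_0^{(1)}=c^{-1}V_1^{(1)}\le c^{-1}V_1^{(0)}=V_0^{(0)}$, which together with $A\le B$ forces the ratio to equal $1$ exactly. The whole proof is thus an accounting of the deviation of $h:=\rho_1/\rho_0$ from its limit $c$, which by hypothesis is confined to a bounded frequency ball: given $\epsilon>0$ choose $R$ with $(c-\epsilon)\rho_0\le\rho_1\le(c+\epsilon)\rho_0$ on $\{|\omega|>R\}$. Splitting every $V_i^{(j)}(n)$ into its parts over $\{|\omega|\le R\}$ and $\{|\omega|>R\}$ and chaining the optimality inequalities on the high-frequency part reproduces the idealized sandwich up to factors $\tfrac{c+\epsilon}{c-\epsilon}$, yielding
$$\frac{B(n)}{A(n)}\le \frac{c+\epsilon}{c-\epsilon}+\frac{1}{A(n)}\Big(L_1(n)+C_R\,L_0(n)\Big),$$
where $L_i(n)=\int_{|\omega|\le R}|g_n^{(i)}|^2\rho_0$ is the low-frequency residual mass and $C_R$ absorbs the bounded ratio of the two densities on the compact ball.

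The crux is therefore to prove the low-frequency negligibility $L_i(n)=o(A(n))$ as $n\to\infty$, for $i=0,1$. This is exactly where the density of $\{x_j\}$ enters. Because the points accumulate near $x_0$ and $e^{i\omega\cdot x}$ is analytic in $x$, one can build competitor coefficients from finite differences of nearby $x_j$ whose residual vanishes to arbitrarily high order at $\omega=0$ and stays uniformly small on $\{|\omega|\le R\}$; comparing $g_n^{(0)}$ against such a competitor through optimality under $\rho_0$ shows that its low-frequency mass decays faster than the total error $A(n)$, which by condition 1 of Lemma~\ref{lem:int_test} only decays at the polynomial rate dictated by the exponent $r$. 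The bound for $g_n^{(1)}$ then follows because on $\{|\omega|>R\}$ the two densities and the two minimizers are comparable up to constants. This decay-rate comparison is precisely the series-tail estimate of Theorem 3.1 in \cite{stein_1990_asym_linear_pred} (with the coefficients $b_{jk},\mu_j$ playing the role indicated there), which I would invoke rather than rederive.

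Putting the pieces together, $\limsup_n B(n)/A(n)\le \tfrac{c+\epsilon}{c-\epsilon}$ for every $\epsilon>0$, and combined with the trivial bound $B(n)/A(n)\ge 1$ this gives the claimed convergence to $1$. The main obstacle is the low-frequency control: everything else is bookkeeping with the ratio $h\to c$, whereas establishing $L_i(n)=o(A(n))$ requires quantifying how infill sampling annihilates low-frequency error faster than the intrinsic high-frequency prediction error vanishes, which is the delicate content borrowed from \cite{stein_1990_asym_linear_pred} and distilled into the simple ratio hypothesis by \cite{stein_1993_stats_prob_letter}.
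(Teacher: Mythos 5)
The paper itself offers no proof of this lemma: it is quoted verbatim from \cite{stein_1993_stats_prob_letter}, and the paper's only engagement with its proof is the remark in Section 5.2 tracing it to Theorem 3.1 of \cite{stein_1990_asym_linear_pred}. Measured against that source, your skeleton is the right one — spectral representation of the residual as $\int g\,dW$, the optimality inequalities $V_i^{(i)}\le V_i^{(k)}$ (hence the trivial lower bound $B(n)/A(n)\ge 1$), the $\epsilon$-sandwich on $\{|\omega|>R\}$ from $\rho_1/\rho_0\to c$, and reduction of everything to low-frequency negligibility. The bookkeeping you do carry out is correct.

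However, two steps are genuinely unsupported by the stated hypotheses. First, the constant $C_R$ presumes $\rho_1\le C_R\,\rho_0$ on the ball $\{|\omega|\le R\}$, but the hypotheses control the ratio only as $|\omega|\to\infty$, and condition 1 of Lemma~\ref{lem:int_test} constrains only the tail of $\rho_0$; on the compact ball nothing prevents $\rho_1$ from blowing up or $\rho_0$ from degenerating, so $\int_{|\omega|\le R}|g_n^{(0)}|^2\rho_1\le C_R L_0(n)$ does not follow. One can instead bound this term by $\sup_{|\omega|\le R}|g_n^{(0)}|^2\cdot\int\rho_1$, but then you need $\sup_{|\omega|\le R}|g_n^{(i)}|^2=o(A(n))$, a strictly stronger claim than $L_i(n)=o(A(n))$, which your argument does not deliver. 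Second, the crux $L_i(n)=o(A(n))$ is asserted, not proved: the finite-difference/analyticity construction shows that \emph{some competitor} has small low-frequency residual, but it does not show that the optimizer $g_n^{(0)}$ — still less the misspecified $g_n^{(1)}$, whose low-frequency mass is not constrained by $\rho_0$-optimality at all (your transfer from $g^{(0)}$ to $g^{(1)}$ again invokes the unjustified local comparability) — has low-frequency mass decaying faster than $A(n)$; note also that the decay rate of $A(n)$ is not ``dictated by $r$'' alone but depends on the design. Citing Theorem 3.1 of \cite{stein_1990_asym_linear_pred} does not close the gap, because that theorem bounds the MSE ratio by a series tail in the coefficients $b_{jk},\mu_j$, and verifying its hypotheses from the single condition $\rho_1/\rho_0\to c$ is precisely the content of \cite{stein_1993_stats_prob_letter}'s proof — exactly the piece absent here. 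As it stands, your proposal is an accurate roadmap of Stein's argument with the difficult step deferred to a theorem whose applicability is the thing to be established.
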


\begin{proof}[Proof of Corollary \ref{cly:iden_diff_nu}]
For $K_1=\mathrm{Mat}(w_1\sigma_1^2,\alpha_1,\nu_1)$, by Theorem \ref{thm:iden_diff_nu}, $K_1\equiv K$ since the microergodic parameters match. 

Regarding MSE, let $\rho$ be the spectral density of the mixture kernel $K$, and $\rho_1$ be the spectral density of the first mixing component $K_1$. We assume $\rho_1$ as the true spectral density. Then by the proof of Theorem~\ref{thm:iden_diff_nu}, $$
\lim_{|\omega|\to \infty} \frac{\rho(\omega)}{\rho_1(\omega)}= \frac{{w}_1{\sigma}_1^2{\alpha}_1^{2\nu_1}}{w_1\sigma_1^2\alpha_1^{2\nu_1}}=1,
$$
then the MSE of GP with $K_1$ is asymptocially equal to the MSE of GP with $K$.
\end{proof}

\section{Proof of Corollary \ref{cly:iden_diff_noise}}

We first state a Lemma regarding the equivalence of Gaussian measure with nuggets.
\begin{lemma}[\cite{tang2022identifiability}]\label{lemma:equiv_nuggets}
    Let ${S}$ be a closed set, $G_S(m,K)$ denotes the Gaussian measure of the random field on ${S}$ with mean function $m$ and covariance function $K$. Let $w(s)$ be a mean square continuous process on $S$ under $G_S(m_1,K_1)$ and $\chi$ be a dense sequence of points in ${S}$. Then 
    \begin{enumerate}
        \item if $\tau^2_1 \neq \tau_2^2$, then $G_\chi(m_1,K_1,\tau_1^2) \perp G_\chi(m_2,K_2,\tau_2^2)$.
        \item if $\tau^2_1 = \tau_2^2$, then $G_\chi(m_1,K_1,\tau_1^2) \equiv G_\chi(m_2,K_2,\tau_2^2)$ if and only if $G_S(m_1,K_1) \equiv G_S(m_2,K_2)$.
    \end{enumerate}
\end{lemma}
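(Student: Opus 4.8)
\textbf{The plan is} to exploit the single structural fact that distinguishes a nugget from a mean-square continuous (MSC) field: the i.i.d. measurement error $\epsilon_k\sim N(0,\tau^2)$ introduces a jump of size $\tau^2$ in the covariance at the diagonal that the MSC part $w$ cannot reproduce. Writing the observed process as $Y(x_k)=w(x_k)+\epsilon_k$, with $w\sim G_S(m,K)$ independent of the $\epsilon_k$, this fact makes $\tau^2$ consistently estimable from a single realization sampled on the dense sequence $\chi=\{x_1,x_2,\dots\}$, which yields Part 1 immediately; Part 2 then follows because, once the nuggets agree, the noise contributions are identical and equivalence is governed entirely by the smooth fields.

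\textbf{For Part 1,} I would build an estimator of $\tau^2$ that converges almost surely and then invoke the standard dichotomy that a consistently estimable parameter separates measures into mutually singular ones. Pair the points of $\chi$ into disjoint pairs $(x_{2i-1},x_{2i})$ chosen so that $\|x_{2i}-x_{2i-1}\|\to 0$ (possible since $\chi$ is dense in the bounded set $S$), and set $D_i=Y(x_{2i})-Y(x_{2i-1})$. Then $D_i=(\epsilon_{2i}-\epsilon_{2i-1})+(w(x_{2i})-w(x_{2i-1}))$, where the noise differences are i.i.d.\ $N(0,2\tau^2)$ and, by mean-square continuity of $w$, $\mathbb{E}(w(x_{2i})-w(x_{2i-1}))^2\to 0$. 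A law-of-large-numbers argument for $\widehat{\tau}^2_n=\tfrac{1}{2n}\sum_{i=1}^n D_i^2$ — controlling the vanishing smooth contribution and the cross term by Cauchy--Schwarz — gives $\widehat{\tau}^2_n\to\tau^2$ almost surely. Hence the event $\{\lim_n\widehat{\tau}^2_n=\tau_1^2\}$ has probability one under $G_\chi(m_1,K_1,\tau_1^2)$ and probability zero under $G_\chi(m_2,K_2,\tau_2^2)$ whenever $\tau_1^2\neq\tau_2^2$, so the two measures are mutually singular.

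\textbf{For Part 2,} assume $\tau_1^2=\tau_2^2=\tau^2$ and argue both directions by transporting measures along a common measurable map, using that push-forward preserves equivalence. For the ``if'' direction, note that under $G_\chi(m_i,K_i,\tau^2)$ the pair $(w,\epsilon)$ has law $G_S(m_i,K_i)\otimes N(0,\tau^2 I)$; since the noise factor is identical, $G_S(m_1,K_1)\equiv G_S(m_2,K_2)$ implies the two product laws are equivalent, and applying the addition map $(w,\epsilon)\mapsto w+\epsilon=Y$ gives $G_\chi(m_1,K_1,\tau^2)\equiv G_\chi(m_2,K_2,\tau^2)$. For the ``only if'' direction, construct a denoising recovery map $\Phi$ that reconstructs the smooth field from $Y$: for each $s\in S$, average $Y$ over points of $\chi$ clustering to $s$, so that the independent noise is annihilated in the limit (averaged variance $O(1/k)\to0$) while the MSC part converges to $w(s)$; this defines $\Phi$ almost surely under either measure with $\Phi_*\,G_\chi(m_i,K_i,\tau^2)=G_S(m_i,K_i)$. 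Since $\Phi$ is the same map for both, $G_\chi(m_1,K_1,\tau^2)\equiv G_\chi(m_2,K_2,\tau^2)$ forces $G_S(m_1,K_1)\equiv G_S(m_2,K_2)$ by push-forward.

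\textbf{The main obstacle} I anticipate is the almost-sure analysis underlying both the estimator $\widehat{\tau}^2_n$ in Part 1 and the recovery map $\Phi$ in Part 2. In Part 1 the increments are not independent — the smooth field correlates them and nearby pairs may share neighbors — so establishing almost-sure (rather than merely $L^2$) convergence needs a martingale or blocking argument together with a quantitative rate for $\mathbb{E}(w(x_{2i})-w(x_{2i-1}))^2\to0$ extracted from mean-square continuity. In Part 2 the delicate point is to verify that local averaging simultaneously kills the noise and recovers $w(s)$ pointwise almost surely for every $s$, which requires a uniform handle on the modulus of mean-square continuity of $w$ along the clustering subsequences; this is where the boundedness of $S$ and the density of $\chi$ are used essentially.
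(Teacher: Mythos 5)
You cannot be compared against an in-paper proof here, because the paper does not prove this statement at all: it is quoted verbatim (as Lemma~\ref{lemma:equiv_nuggets}) from \cite{tang2022identifiability} and used as a black box in the proof of Corollary~\ref{cly:iden_diff_noise}. So your attempt is a blind reconstruction of the cited result, and it follows what is essentially the standard route: Part~1 by a difference-based statistic that consistently recovers the nugget from observations on the dense sequence, so that the event $\{\lim_n \widehat{\tau}^2_n=\tau_1^2\}$ has probability one under one measure and zero under the other; Part~2 by restriction-to-$\chi$, product-with-identical-noise, and push-forward under the addition map in one direction, and a local-averaging denoising map $\Phi$ in the other. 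Both directions of Part~2 are sound as sketched: push-forward and restriction preserve absolute continuity, and $\Phi_*G_\chi(m_i,K_i,\tau^2)=G_S(m_i,K_i)$ follows once coordinate-wise a.s.\ recovery of $w(s)$ is established, since finite-dimensional distributions then agree.

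Three points deserve tightening, none fatal. First, the almost-sure machinery you flag as the main obstacle in Part~1 is avoidable: convergence in probability to distinct constants already suffices, because you may pass to a subsequence along which $\widehat{\tau}^2_{n_k}\to\tau_1^2$ a.s.\ under $P_1$, and the event ``the limit along this subsequence equals $\tau_1^2$'' then has $P_1$-probability one and $P_2$-probability zero (under $P_2$ a further subsequence converges a.s.\ to $\tau_2^2$). The only non-i.i.d.\ contributions — the field increments $(\Delta w_i)^2$ and the cross term — vanish in $L^1$ by uniform mean-square continuity on a bounded set (continuity of $K$ on a compact is uniform), while the dominant noise part is genuinely i.i.d., so no martingale or blocking argument is needed. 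Second, your argument tacitly requires mean-square continuity (and continuity of the mean) under \emph{both} $G_S(m_1,K_1)$ and $G_S(m_2,K_2)$, whereas the statement as transcribed assumes it only under the first; if $K_2$ itself had a diagonal jump of size $\delta$, your statistic would converge to $\tau_2^2+\delta$ under $P_2$ and the separation could fail when $\tau_2^2+\delta=\tau_1^2$. This matches the actual hypotheses in \cite{tang2022identifiability} (both covariances continuous) and is harmless for the paper's use, where all kernels are Mat\'ern mixtures, but it should be stated. Third, in the ``only if'' direction of Part~2, define $\Phi$ coordinate-wise as a $\limsup$ of averages along a sparsified (e.g., dyadic) subsequence of the clustering points, so that summable variances plus Borel--Cantelli (Gaussian fourth moments give $\mathrm{Var}(Z^2)=2(\mathbb{E}Z^2)^2$) yield a.s.\ recovery of $w(s)$ for each fixed $s$ under either measure; the plain Ces\`aro average has noise variance $\tau^2/k$, which is not summable, so a.s.\ convergence of the full sequence is not automatic. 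With these repairs your sketch is a correct, self-contained proof of the lemma — more constructive than the paper, which simply cites it.
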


\begin{proof}[Proof of Corollary \ref{cly:iden_diff_noise}]
We denote the adjusted kernel with noise of $K$ and $\widetilde{K}$ as $K_\tau$ and $\widetilde{K}_{\widetilde{\tau}}$. 

From Lemma~\ref{lemma:equiv_nuggets}(1), where we set $m_1=m_2=0$, if $\tau^2 \neq \widetilde{\tau}^2$, then $K_\tau \not \equiv \widetilde{K}_{\widetilde{\tau}}$. If $\tau^2 = \widetilde{\tau}^2$, then $K_\tau \equiv \widetilde{K}_{\widetilde{\tau}}$ if and only if $K \equiv \widetilde{K}$. Then the previous results in Theorem~\ref{thm:iden_diff_nu} holds.
\end{proof}

\section{Additional theorem - mixture of Mat\'ern kernels with the same smoothness}
Here we introduce a special scenario for Theorem \ref{thm:iden_same_nu} where all component in the mixture kernel have same smoothness. The simulation study for the following theorems are included in Section \ref{experiment_detail}: simulation 4.
\begin{theorem}\label{thm:iden_same_nu}
    For $K=\sum_{l=1}^L K_l$ where $K_l=\mathrm{Mat}(\sigma^2_l,\alpha_l,\nu)$,then
    \begin{enumerate}[(i)]
    \item When $p=1,2,3$, the only identifiable parameter is $\sum_{l=1}^L w_l\sigma^2_l\alpha_l^{2\nu}$.
    \item When $p>4$, the only identifiable parameters are $\sum_{l=1}^L w_l\sigma^2_l\alpha_l^{2\nu}$ and $\sum_{l=1}^L w_l\sigma^2_l\alpha_l^{2\nu+2}$.
    \end{enumerate}
    As a consequence, none of the parameters are identifiable. 
\end{theorem}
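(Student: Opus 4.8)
The plan is to mirror the two-regime structure of the proof of Theorem~\ref{thm:iden_diff_nu}, exploiting the fact that here every component shares the \emph{same} smoothness $\nu$, so that the spectral densities $\rho_l(\omega)=\sigma_l^2\alpha_l^{2\nu}/(\alpha_l^2+\|\omega\|^2)^{\nu+p/2}$ all carry the identical exponent $q\coloneqq\nu+\tfrac{p}{2}$. Writing $S_0\coloneqq\sum_{l=1}^L w_l\sigma_l^2\alpha_l^{2\nu}$ and $S_1\coloneqq\sum_{l=1}^L w_l\sigma_l^2\alpha_l^{2\nu+2}$, the first step is a termwise binomial expansion of $\rho=\sum_l w_l\rho_l$ as $\|\omega\|\to\infty$, using $(\alpha_l^2+\|\omega\|^2)^{-q}=\|\omega\|^{-2q}(1-q\alpha_l^2\|\omega\|^{-2}+O(\|\omega\|^{-4}))$. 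Summing against the weights $w_l\sigma_l^2\alpha_l^{2\nu}$, the $O(1)$ term collects into $S_0$ and the $O(\|\omega\|^{-2})$ term into $S_1$, giving
$$\rho(\omega)=S_0\|\omega\|^{-(2\nu+p)}-\left(\nu+\tfrac{p}{2}\right)S_1\|\omega\|^{-(2\nu+p+2)}+O\!\left(\|\omega\|^{-(2\nu+p+4)}\right).$$
Thus the two leading coefficients of the tail of $\rho$ are exactly $S_0$ and $S_1$; this is the crucial contrast with Theorem~\ref{thm:iden_diff_nu}, where distinct $\nu_l$ let only the least-smooth component survive, whereas here \emph{all} components pool into $S_0$ and $S_1$.

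For $p=1,2,3$ I would invoke the integral test (Lemma~\ref{lem:int_test}). Condition~1 holds with $r=2\nu+p$, since $\|\omega\|^{r}\rho(\omega)\to S_0$, which is bounded away from $0$ and $\infty$. For condition~2, the expansion above yields $(\rho-\widetilde\rho)/\rho=(S_0-\widetilde S_0)/S_0+O(\|\omega\|^{-2})$, so if $S_0=\widetilde S_0$ the relative difference is $O(\|\omega\|^{-2})$, its square is $O(\|\omega\|^{-4})$, and $\int_{\|\omega\|>\delta}\|\omega\|^{-4}\,\mathrm{d}\omega<\infty$ precisely when $p<4$. This gives $K\equiv\widetilde K$ and singles out $S_0$ as the only candidate identifiable parameter in this regime.

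For $p>4$ I would switch to the principal-irregular-term method of \cite{anderes2010consistent}, exactly as in the $p\geq5$ branch of Theorem~\ref{thm:iden_diff_nu}. Using $K_l(x+h,x')=\sigma_l^2 G_\nu(|\alpha_l h|)-\nu\sigma_l^2 G_{\nu+1}(|\alpha_l h|)+\iota_l(|\alpha_l h|)$ together with $G_\nu(t)=C_\nu t^{2\nu}$, the shared $\nu$ makes the coefficients collapse cleanly, giving $K(x+h,x')=C_\nu S_0|h|^{2\nu}-\nu C_{\nu+1}S_1|h|^{2\nu+2}+\iota(|h|)$, where $\iota$ is a polynomial of even degree plus $o(G_{\nu+1}(|h|))$. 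The smoothness gap between these two irregular terms is $2$, so Theorem~4 of \cite{anderes2010consistent} applies under $0<2\cdot2<p$, i.e. $p>4$, and delivers consistent estimators of $S_0$ and $S_1$ while showing that no finer functional of the parameters is microergodic. Finally, in both regimes the conclusion that no \emph{single} parameter is identifiable follows because $S_0$ (and $S_1$) are genuine nonlinear combinations of the $(w_l,\sigma_l^2,\alpha_l)$: one can always perturb an individual parameter while holding $S_0$ (and $S_1$) fixed.

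The main obstacle I anticipate is the $p>4$ case rather than the expansion itself. Once $S_0$ is matched the relative spectral difference is only $O(\|\omega\|^{-2})$, so the naive integral test of Lemma~\ref{lem:int_test} no longer certifies equivalence for large $p$ (the integrand $\|\omega\|^{-8}$ fails to be integrable once $p\geq 8$). This forces reliance on the deeper principal-irregular-term results of \cite{anderes2010consistent}, and the delicate point is verifying that the remainder $\iota$ genuinely has the required form and that the gap condition $0<4<p$ is exactly what is needed to certify that $S_0$ and $S_1$—and nothing finer—are the microergodic parameters. Getting this identification correct, rather than the comparatively routine $p\leq 3$ integral test, is where the real work lies.
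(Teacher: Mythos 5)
Your proposal is correct and follows essentially the same route as the paper's proof: the integral test (Lemma~\ref{lem:int_test}) with $r=2\nu+p$ and microergodic parameter $\sum_{l}w_l\sigma_l^2\alpha_l^{2\nu}$ for $p=1,2,3$, and the principal-irregular-term machinery of \cite{anderes2010consistent} (their Theorem 4, with the gap condition $0<4<p$) yielding the two identifiable combinations for $p>4$. The only real difference is organizational but welcome: your termwise binomial expansion of each $\rho_l$ treats perturbations of all of $(w_l,\sigma_l^2,\alpha_l)$ symmetrically, whereas the paper's displayed computation writes $\rho$ over a common denominator and, as written, only varies the weights $\widetilde{w}_l$ while keeping $\sigma_l^2$ and $\alpha_l$ fixed, so your version actually covers the general perturbation the theorem statement requires.
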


\begin{proof} 
The proof is similar to the proof of Theorem \ref{thm:iden_diff_nu}. Recall that the spectral densities of $K_l$ are
$$\rho_l(\omega) = \frac{\sigma_l^2\alpha_l^{2\nu}}{(\alpha_l^2+\|\omega\|^2)^{\nu+\frac{p}{2}}},~\rho(\omega)=\sum_{l=1}^L w_l\frac{\sigma_l^2\alpha_l^{2\nu}}{(\alpha_l^2+\|\omega\|^2)^{\nu+\frac{p}{2}}},~\widetilde{\rho}(\omega) = \sum_{l=1}^L\widetilde{\omega}_l\frac{\sigma_l^2\alpha_l^{2\nu}}{(\alpha_l^2+\|\omega\|^2)^{\nu+\frac{p}{2}}}.$$
To apply the integral test, first, let $r = 2(\nu+\frac{p}{2})$, then

$$\rho(\omega)\|\omega\|^r=\sum_{l=1}^L w_l\frac{\sigma_l^2\alpha_l^{2\nu}\|\omega\|^r}{(\alpha_l^2+\|\omega\|^2)^{\nu+\frac{p}{2}}}\xrightarrow[]{\omega\to\infty}\sum_{l=1}^Lw_l\sigma_l^2\alpha_l^{2\nu},$$
that is, bounded away from both $0$ and $\infty$ as $\omega\to\infty$.
Then we check the relative difference between the spectral densities $\rho$ and $\widetilde{\rho}$. Observe that
\begin{align*}
    \rho(\omega) &=\sum_{l=1}^L w_l\frac{\sigma_l^2\alpha_l^{2\nu}}{(\alpha_l^2+\|\omega\|^2)^{\nu+\frac{p}{2}}}= \frac{\sum_{l=1}^L w_l\sigma_l^2\alpha^{2\nu}\Pi_{j\neq l}(\alpha_j^2+\|\omega\|^2)^{\nu+\frac{p}{2}}}{\Pi_{l=1}^L (\alpha_l^2+\|\omega\|^2)^{\nu+\frac{p}{2}}}\eqqcolon \frac{\sum_{l=1}^L w_l\sigma_l^2\alpha^{2\nu}p_l(\omega)}{\Pi_{l=1}^L (\alpha_l^2+\|\omega\|^2)^{\nu+\frac{p}{2}}},
\end{align*}
where $p_l\coloneqq\Pi_{j\neq l}(\alpha_j^2+\|\omega\|^2)^{\nu+\frac{p}{2}}$. Observe that 
\begin{align}
    |p_l(\omega)|&\geq \|\omega\|^{2(L-1)(\nu+\frac{p}{2})} \label{eqn:p_l_bound}\\
    \frac{p_l(\omega)}{\|\omega\|^{2(L-1)(\nu+\frac{p}{2})}} &=\Pi_{j\neq l}\left(1+\frac{\alpha_j^2}{\|\omega\|^2}\right)^{\nu+\frac{p}{2}}=1+O(\|\omega\|^{-2}).\label{eqn:p_l/omega}
\end{align}
Now we can analyze the relative difference:
\begin{align*}
\left|\frac{\rho(\omega)-\widetilde{\rho}(\omega)}{\rho(\omega)}\right|& = \left|\frac{\frac{\sum_{l=1}^L w_l\sigma_l^2\alpha^{2\nu}p_l(\omega)}{\Pi_{l=1}^L (\alpha_l^2+\|\omega\|^2)^{\nu+\frac{p}{2}}}-\frac{\sum_{l=1}^L \widetilde{w}_l\sigma_l^2\alpha^{2\nu}p_l(\omega)}{\Pi_{l=1}^L (\alpha_l^2+\|\omega\|^2)^{\nu+\frac{p}{2}}}}{\frac{\sum_{l=1}^L w_l\sigma_l^2\alpha^{2\nu}p_l(\omega)}{\Pi_{l=1}^L (\alpha_l^2+\|\omega\|^2)^{\nu+\frac{p}{2}}}}\right|\\
& = \left|\frac{\sum_{l=1}^L w_l\sigma_l^2\alpha^{2\nu}p_l(\omega)-\sum_{l=1}^L \widetilde{w}_l\sigma_l^2\alpha^{2\nu}p_l(\omega)}{\sum_{l=1}^L w_l\sigma_l^2\alpha^{2\nu}p_l(\omega)}\right|\\
& \leq \frac{1}{\sum_{l=1}^L w_l\sigma_l^2\alpha^{2\nu}}\left|\frac{\sum_{l=1}^L w_l\sigma_l^2\alpha^{2\nu}p_l(\omega)-\sum_{l=1}^L \widetilde{w}_l\sigma_l^2\alpha^{2\nu}p_l(\omega)}{\|\omega\|^{2(L-1)(\nu+\frac{p}{2})}}\right|\\
& =\frac{1}{\sum_{l=1}^L w_l\sigma_l^2\alpha^{2\nu}}\left|\sum_{l=1}^L w_l\sigma_l^2\alpha^{2\nu}(1+O(\|\omega\|^2)-\sum_{l=1}^L \widetilde{w}_l\sigma_l^2\alpha^{2\nu}(1+O(\|\omega\|^2)\right|\\
& =\frac{1}{\sum_{l=1}^L w_l\sigma_l^2\alpha^{2\nu}}\left|\sum_{l=1}^L w_l\sigma_l^2\alpha^{2\nu}-\sum_{l=1}^L \widetilde{w}_l\sigma_l^2\alpha^{2\nu}+O(\|\omega\|^{-2})\right|.
\end{align*}
As a result, if $\sum_{l=1}^L w_l\sigma_l^2\alpha^{2\nu}= \sum_{l=1}^L \widetilde{w}_l\sigma_l^2\alpha^{2\nu}$, 
$$\int_{\omega\in\RR^p:\|\omega\|>1} \left|\frac{\rho(\omega)-\widetilde{\rho}(\omega)}{\rho(\omega)}\right|^2\mathrm{d}\omega \approx \int_{\omega\in\RR^p:\|\omega\|>1} \frac{1}{\|\omega^4\|}\mathrm{d}\omega <\infty,$$ when $p=1,2,3$, so $K\equiv \widetilde{K}$. That is, none of the parameters are identifiable, while the parameter that might be identifiable is $\sum_{l=1}^Lw_l\sigma_l^2\alpha_l^{2
\nu}$, also known as the microergodic parameter.  

Now we turn to the case for $p\geq 5$. The proof is similar to the proof of Theorem \ref{thm:iden_diff_nu}: it suffices to analyze the principal irregular terms for $K$. For the spectral density of each individual kernel component, the principal irregular term is
$$G_{\nu}(\omega) = \frac{-\pi}{2^{2\nu}\sin(\nu\pi)\Gamma(\nu)\Gamma(\nu+1)}\omega^{2\nu}\eqqcolon C_l\omega^{2\nu}.$$
Furthermore, $$K_l(x+h,x')=\sigma_l^2G_{\nu}(|\alpha_lh|)-\nu \sigma^2 G_{\nu+1}(|\alpha_lh|)+\iota(|\alpha_lh|),$$
where $\iota(t)=p(|h|)+o(G_{\nu+1}(|h|))$ as $|h|\to0$ and $p$ is a polynomial with even degree.
By linearity, we have
\begin{align*}K(x+h,x')&= \sum_{l=1}^Lw_lK_l(x+h,x')\\
&=\sum_{l=1}^Lw_l\left(\sigma_l^2G_{\nu}(|\alpha_lh|)-\nu \sigma^2_l G_{\nu+1}(|\alpha_lh|)+\iota(t)\right)\\
& = \sum_{l=1}^Lw_l\sigma_l^2G_{\nu}(|\alpha_lh|)-\sum_{l=1}^L\nu w_l\sigma_l^2G_{\nu+1}(|\alpha_lh|)+\iota(t),
\end{align*}
where $\iota(t) =p(|h|)+o(G_{\nu+1}(|h|))$ as $|h|\to0$. They by Theorem 4 of \cite{anderes2010consistent}, there exists consistent estimators for $\sum_{l=1}^Lw_l\sigma_l^2\alpha_l^{2\nu}$ and $\sum_{l=1}^L w_l\sigma^2_l\alpha_l^{2(\nu+1)}$ when $0<2(2(\nu+1)-2\nu)<p$, that is $0<4<p$. This completes the proof. 
\end{proof}

\section{Proof of Theorem \ref{thm:multiplicative}}
(i) is a direct corollary of Theorem 3 of \cite{bachoc2022asymptotically}, where $\sigma_{ii}\sigma_{jj}\rho_{ij}=A_{ij}\sigma^2$ and $\alpha_{ij}=\frac{1}{\alpha}$ for any $i,j=1,\cdots,m$.

Now we prove (ii). Let $\theta$ be the microergodic parameter of $K_0$, that is, $K_0\equiv \widetilde{K}_0 \Longleftrightarrow \theta =\widetilde{\theta}$, where $\widetilde{K}_0$ is characterized by $\widetilde{\theta}$. Let $\rho_0$ and $\widetilde{\rho_0}$ be the spectral densities of $K$ and $\widetilde{K}_0$, then the matrix spectral densities of $K$ and $\widetilde{K}$ are $P\coloneqq A\rho_0$ and $\widetilde{P}\coloneqq \widetilde{A}\widetilde{\rho}_0$. By the assumption, that is, there exists constants $b,~c_1,~c_2>0$ and $\gamma\in\mathcal{W}_{[-b,b]^m}$ such that $c_1\gamma^2(\omega)\leq \rho_0(\omega)\leq c_2\gamma^2(\omega)$, we claim that there exists $q_1,~q_2>0$ such that
\begin{equation}\label{eqn:matrix_cond}
    q_1\gamma^2(\omega)\mathrm{I}_m\leq P(\omega)\leq q_2\gamma^2(\omega)\mathrm{I}_m,~\forall\omega\in\RR^p.
\end{equation}
From the assumption, 
\begin{align*}
    P(\omega)=A\rho_0\leq c_2\gamma^2(\omega)A\leq c_2\eig_1(A)\gamma^2(\omega),
\end{align*}
where $\eig_1(A)$ is the largest eigenvalue of $A$. Similarly, $c_1\eig_m(A)\gamma^2(\omega)\mathrm{I}_m\leq P(\omega)$ where $\eig_m(A)>0$ is the smallest eigenvalue of $A$, which is positive by the positive definiteness of $A$. Then claim \eqref{eqn:matrix_cond} holds where $q_1\coloneqq c_1\eig_m(A)$ and $q_1\coloneqq c_2\eig_1(A)$. Now we can prove (ii). If $\theta A=\widetilde{\theta}\widetilde{A}$, then for any $i,j=1,\cdots,m$,
\begin{align*}
    \int_{\RR^p}\frac{1}{\gamma^4(\omega)}\left(P_{ij}(\omega)-\widetilde{P}_{ij}(\omega)\right)^2\mathrm{d}\omega
    &=\int_{\RR^p}\frac{1}{\gamma^4(\omega)}\left(A_{ij}\rho_0(\omega)-\widetilde{A}_{ij}\widetilde{\rho}_0(\omega)\right)^2\mathrm{d}\omega\\
    & = \int_{\RR^p}\frac{1}{\gamma^4(\omega)}\left(\theta A_{ij}\frac{\rho_0(\omega)}{\theta}-\widetilde{\theta}\widetilde{A}_{ij}\frac{\widetilde{\rho}_0(\omega)}{\widetilde{\theta}}\right)^2\mathrm{d}\omega\\
    & = \int_{\RR^p}\frac{1}{\gamma^4(\omega)}\left(\theta A_{ij}\frac{\rho_0(\omega)}{\theta}-{\theta}{A}_{ij}\frac{\widetilde{\rho}_0(\omega)}{\widetilde{\theta}}\right)^2\mathrm{d}\omega\\
    & = (\theta A_{ij})^2\int_{\RR^p}\frac{1}{\gamma^4(\omega)}\left(\frac{\rho_0(\omega)}{\theta}-\frac{\widetilde{\rho}_0(\omega)}{\widetilde{\theta}}\right)^2\mathrm{d}\omega\\
    &<\infty,
\end{align*}
where the last inequality holds from the assumption that $\theta$ is the microergodic parameter of $K_0$. As a result, by Theorem 2 of \cite{bachoc2022asymptotically}, $\theta A$ is the microergodic parameter of $K$.

\section{Experiment details} \label{experiment_detail}
\subsection{Simulation 1}

In the first simulation, we examined a mixture kernel defined as follows: $K = \sum_{l=1}^3w_lK_l$, wherein $K_l$ represents the Mat\'ern kernel with parameters $(\sigma^2_l,\alpha_l,\nu_l)$. For this simulation, we assigned the values $\nu_1=1/2,\nu_2=3/2,\nu_3=5/2$ for the smoothness parameter.

The weights for each kernel component, represented as $(w_1,w_2,w_3)$, were chosen as $(0.03,0.33,0.63)$. This set-up presents the case where despite $w_1$ being the smallest, which is associated with the kernel that exhibits the least smoothness, our result further substantiates our claim about the dominance of the kernel with the lowest smoothness over the influence of weights. In terms of the scale parameters, for $(\sigma^2_1,\sigma^2_2,\sigma^2_3)$, we picked $(3,3,3)$. We selected $(\alpha_1,\alpha_2,\alpha_3)$ to be $(1,1,1)$. This uniform selection across the components aids in isolating the effect of the smoothness and weights in our analysis.

\subsection{Simulation 2}

In the second simulation, we consider the following mixture kernel:
$K = \sum_{l=1}^3w_lK_l$, where $K_l$ is the Mat\'ern kernel with parameter $(\sigma^2_l,\alpha_l,\nu_l)$. For this simulation, the values for the smoothness parameter, $\nu$, were set as $\nu_1=1/2,\nu_2=3/2,\nu_3=5/2$. Regarding the choice of the weight parameters $(w_1,w_2,w_3)$, we selected $(0.1,0.3,0.6)$ to reflect the difference in contribution of each kernel to the overall mixture. Although the weight of the Mat\'ern $1/2$ kernel ($w_1$) is small, it still remains dominant due to its lesser smoothness. This setup allowed us to test our hypothesis that smoothness plays a more significant role than weight in determining the identifiability of parameters. $(\sigma^2_1,\sigma^2_2,\sigma^2_3)$ are set to $(16,4,1)$, and $(\alpha_1,\alpha_2,\alpha_3)$ are set to $(4,2,1)$. This variation further facilitated the examination of our proposition that the parameters associated with the least smooth kernel converge to their true values, while others do not.

In this scenario, we generate $X \in \RR^2$ from an equal-spaced sequence ranging from $-10$ to $10$, with a random variation sampled from $\mathrm{unif}(-\frac{1}{5n},\frac{1}{5n})$. The sample size, $n$, takes on values from the set ${20,50,100,500}$. For each sample size, we replicate the simulation 100 times. Subsequently, $Y$ is simulated from $N(0,K+\epsilon)$. We consider $\epsilon$ as a fixed value ($\epsilon=0.1$) and include it for numerical robustness. This $\epsilon$ is also added in the training process. For parameter initialization, we initialize $w=(0.2,0.3,0.5)$, $\sigma^2=(5.0067,10,15)$ and $\alpha=(0.7615,0.4702,0.3280)$. These initial values were close to the true parameters, yet sufficiently distinct to illustrate the efficacy of the learning process. Here we use the SGD optimizer with $0.005$ learning rate and $1000$ epochs. All parameter estimates are summarised in the Figure \ref{sim3-allpara-fig}.

\begin{figure}[h]
  \centering
  \includegraphics[width=\textwidth]{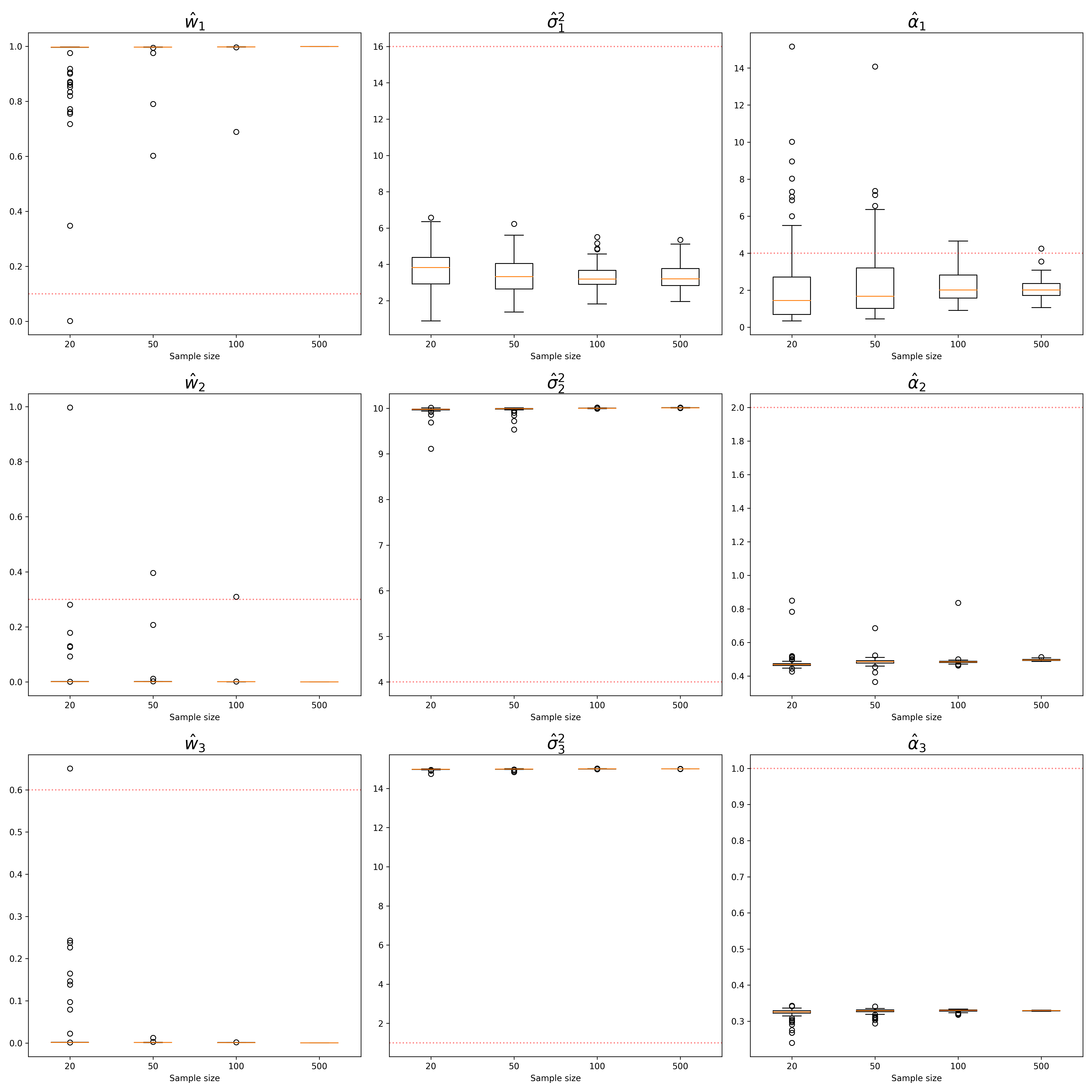}
  \caption{All parameter estimation in simulation 2.}
   \label{sim3-allpara-fig}
\end{figure}

We further examined the simulation in a larger sample size and also used a different optimizer L-BFGS (learning rate 0.5 for $n=50,500$, 1.0 for $n=100$). The results are consistent with the previous results. 

\begin{figure}[h]
  \centering
  \includegraphics[width=\textwidth]{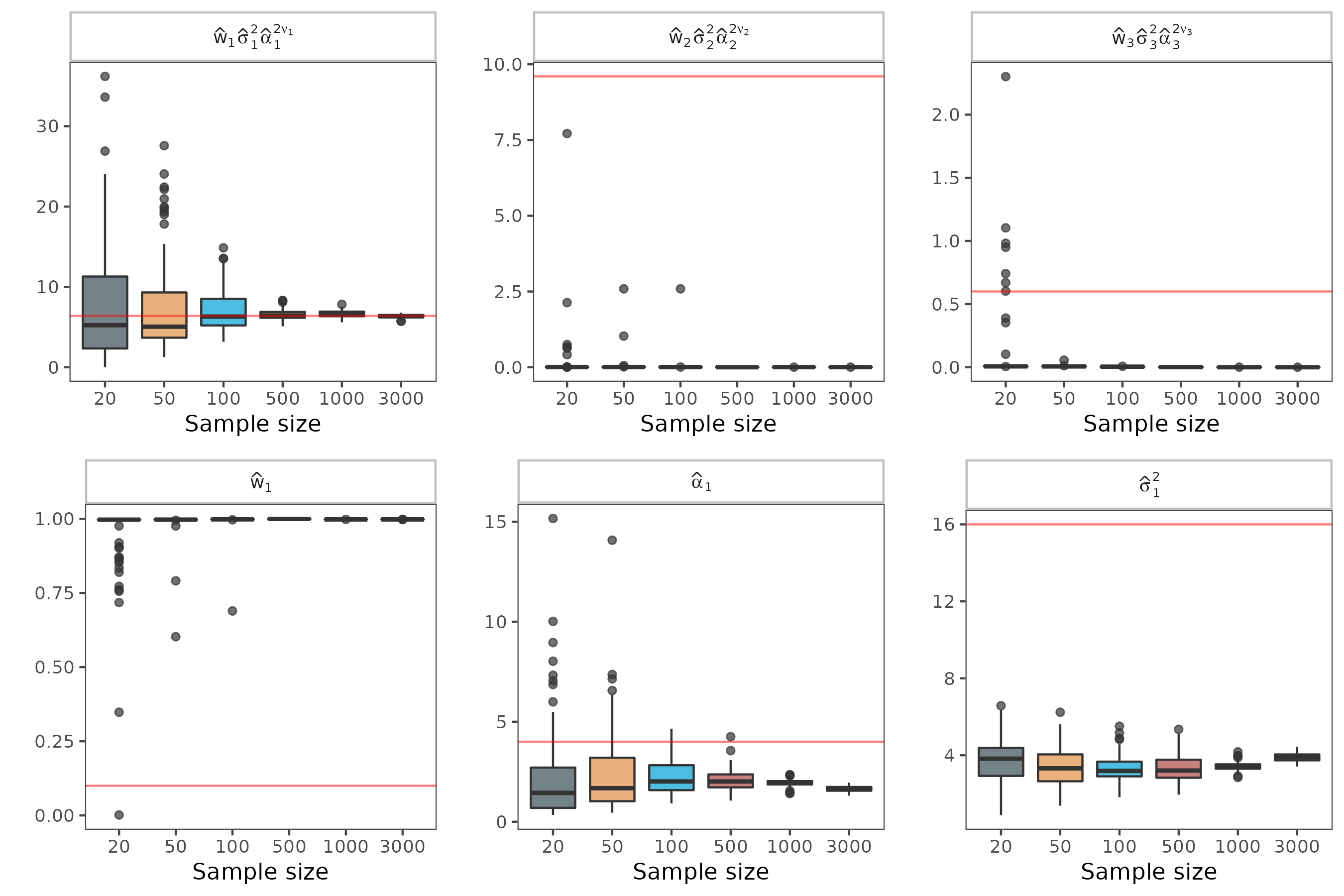}
  \caption{Parameter estimation in simulation 2 with larger sample size.}
   \label{sim3-largern-fig}
\end{figure}

\begin{figure}[h]
  \centering
  \includegraphics[width=\textwidth]{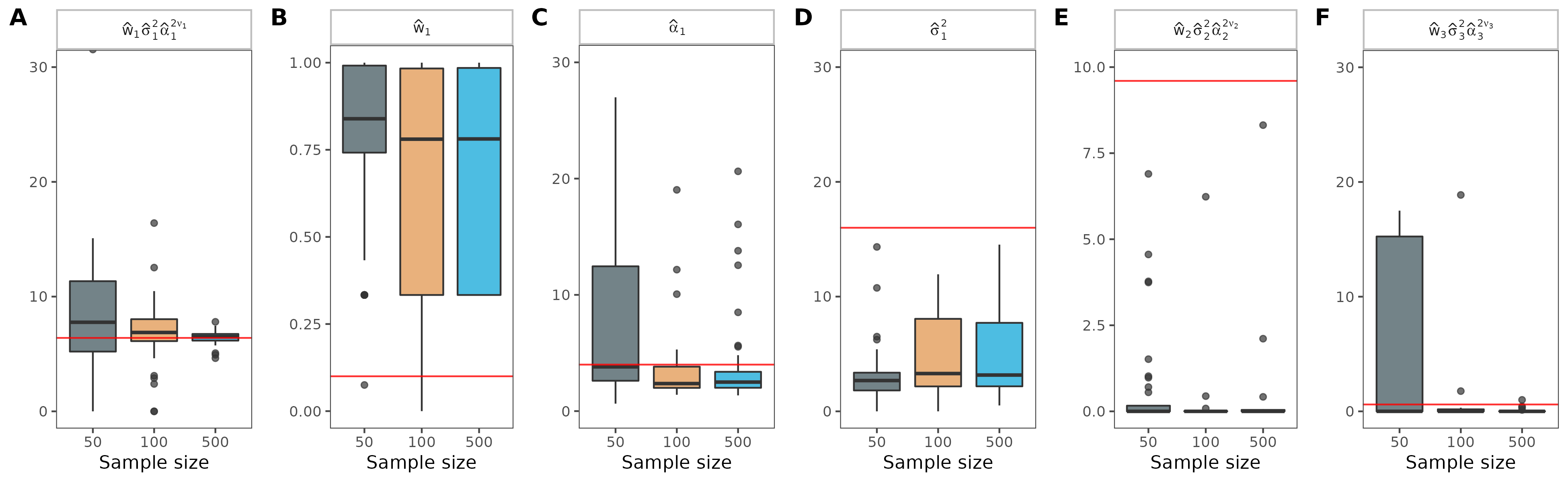}
  \caption{Parameter estimation in simulation 2 with L-BFGS optimizer.}
   \label{sim3-lbfgs-fig}
\end{figure}

\subsection{Simulation 3}

In the third simulation, we consider the following mixture kernel:
$K=AK_0$, where $K_0$ is the Mat\'ern kernel with parameter $(\alpha,\sigma^2,\nu)$. For this simulation, we set $\nu=1/2$. The ground truth for the parameters is assigned as follows: the matrix $A$ is set as $[[5,1][1,5]]$, which serves as a symmetric, positive-definite structure to facilitate the properties of the covariance matrix; $\sigma^2$ is set to $10$, and $\alpha$ is set to $1$. 

In this simulation, we generate $X \in \RR^2$ from an equal-spaced sequence ranging from $-10$ to $10$, with a random variation sampled from $\mathrm{unif}(-\frac{1}{5n},\frac{1}{5n})$. The sample size, $n$, takes on values from the set ${50,100,200,400}$. For each sample size, we replicate the simulation 100 times. Subsequently, $Y$ is simulated from $N(0,K+\epsilon)$. We consider $\epsilon$ as a fixed value ($\epsilon=0.5$). This $\epsilon$ is also added in the training process. In terms of parameter initialization, we elect to start with $A$ as an identity matrix, $\sigma^2=1$ and $\alpha=10$. The initialization of $A$ as an identity matrix ensures a simple, non-informative starting point, while the initial $\sigma^2$ and $\alpha$ are chosen to be significantly distinct from the ground truth to assess the robustness of the learning process. Here we use the SGD optimizer with $0.001$ learning rate and $2000$ epochs. The parameter estimates of $A$ are summarised in the Figure \ref{sim4-allpara-fig}.

\begin{figure}[h]
  \centering
  \includegraphics[width=\textwidth]{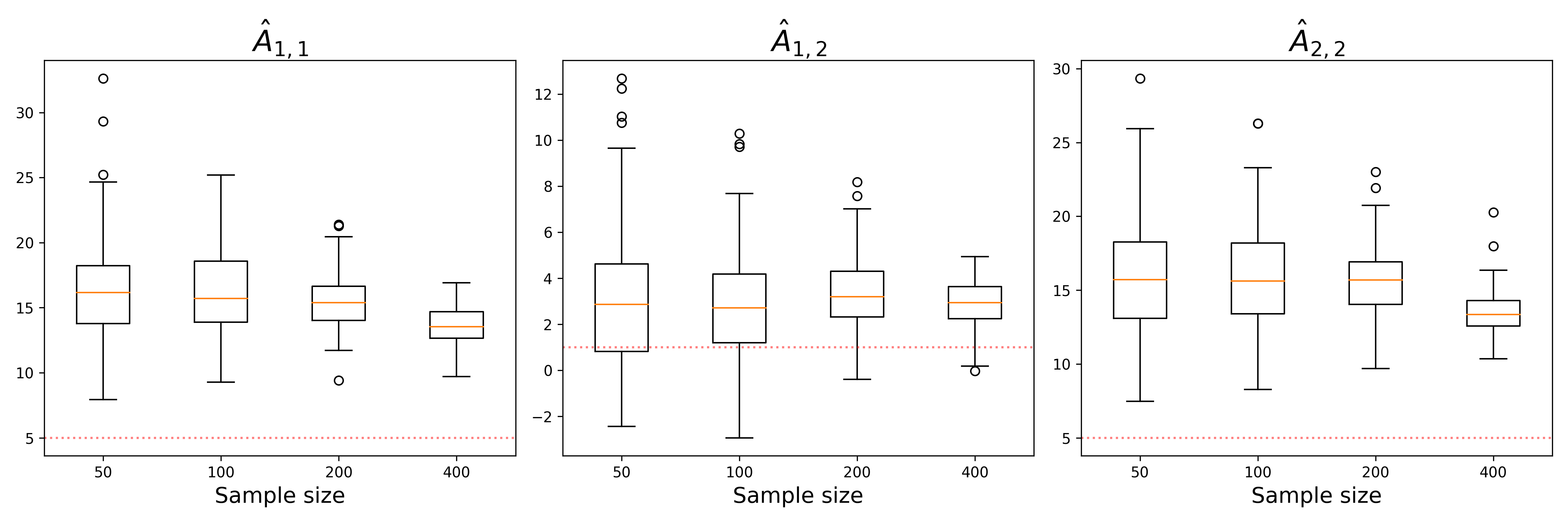}
  \caption{Parameter estimation in simulation 3.}
   \label{sim4-allpara-fig}
\end{figure}

\subsection{Additional simulation (Simulation 4) for mixture of Mat\'ern kernels with the same smoothness}

In this additional simulation 4, we aim to evaluate the parameter identifiability in a GP with a mixture kernel consisting of Mat\'ern kernels with the same smoothness. We denote the mixture as $K=\sum_{l=1}^Lw_lK_l$, where $K_l$ is a Mat\'ern kernel with parameters $(\sigma_l^2,\alpha_l,\nu)$. Our theorem suggests that only the microergodic parameter $\sum_{l=1}^L w_l\sigma^2_l\alpha_l$ might be identifiable, while other parameters are not identifiable. Consequently, we will evaluate the parameter estimation by comparing it with the true value for different training sample sizes.

Here we use mixture of three Mat\'ern $1/2$ kernels. We generate $X \in \RR^2$ from an equal-spaced sequence with values ranging from $-10$ to $10$, with a random variation sampled from $\mathrm{unif}(-\frac{1}{10n},\frac{1}{10n})$. The sample size $n$ takes values from ${20,50,100,500}$. For each sample size, we replicate the simulation 100 times. Subsequently, we simulate $Y$ from a normal distribution with a mean of 0 and covariance matrix $K+\epsilon$. We treat $\epsilon$ as a fixed value ($\epsilon=0.1$) and include it for numerical robustness. This $\epsilon$ is also added in the training process. The ground truth parameters are given as $w=(0.2,0.3,0.5)$, $\sigma^2=(16,4,1)$ and $\alpha=(2,1,4)$. Here the weight and sigma are set to true value,  $(\alpha_1,\alpha_2,\alpha_3)$ are set to $(4,2,1)$. The motivation behind our specific parameter initialization strategy is to intentionally introduce some degree of initial discrepancy. This approach allows us to critically observe the convergence behavior of the parameters. Here we use the Adam optimizer with $0.01$ learning rate and $1000$ epochs.

\begin{figure}[h]
  \centering
  \includegraphics[width=\textwidth]{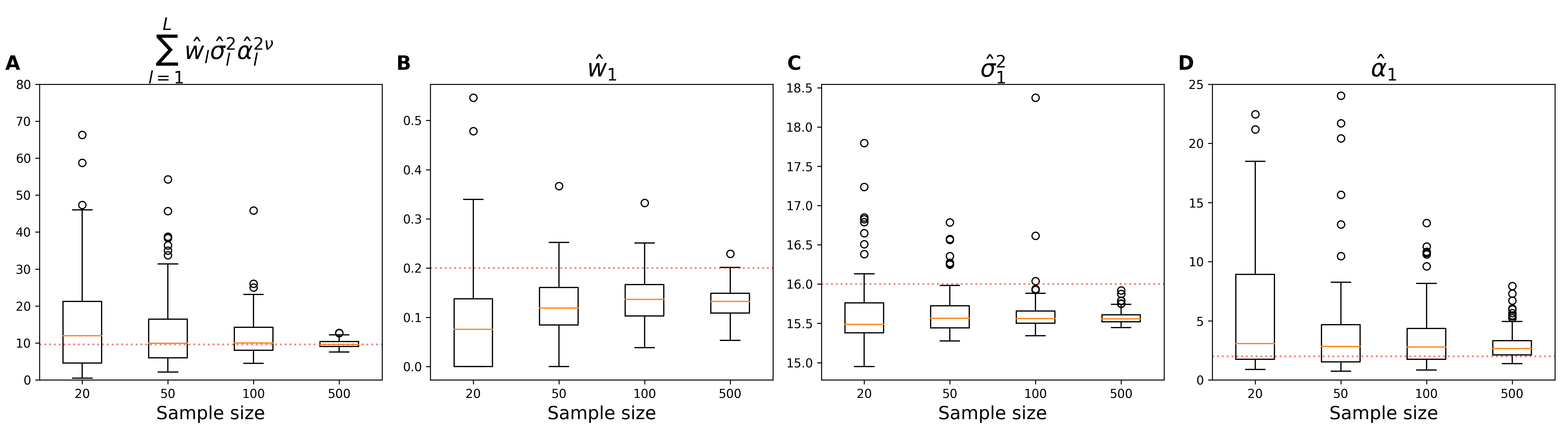}
  \caption{Parameter estimation in simulation 4. (A) Only the microergodic parameter $\sum_{l=1}^L w_l\sigma^2_l\alpha_l$ might be identifiable (B-D) All other parameters $(\sigma_l^2,\alpha_l,w_l)_{l=1}^L$ are not identifiable.}
   \label{sim2-fig}
\end{figure}

\begin{figure}[h]
  \centering
  \includegraphics[width=\textwidth]{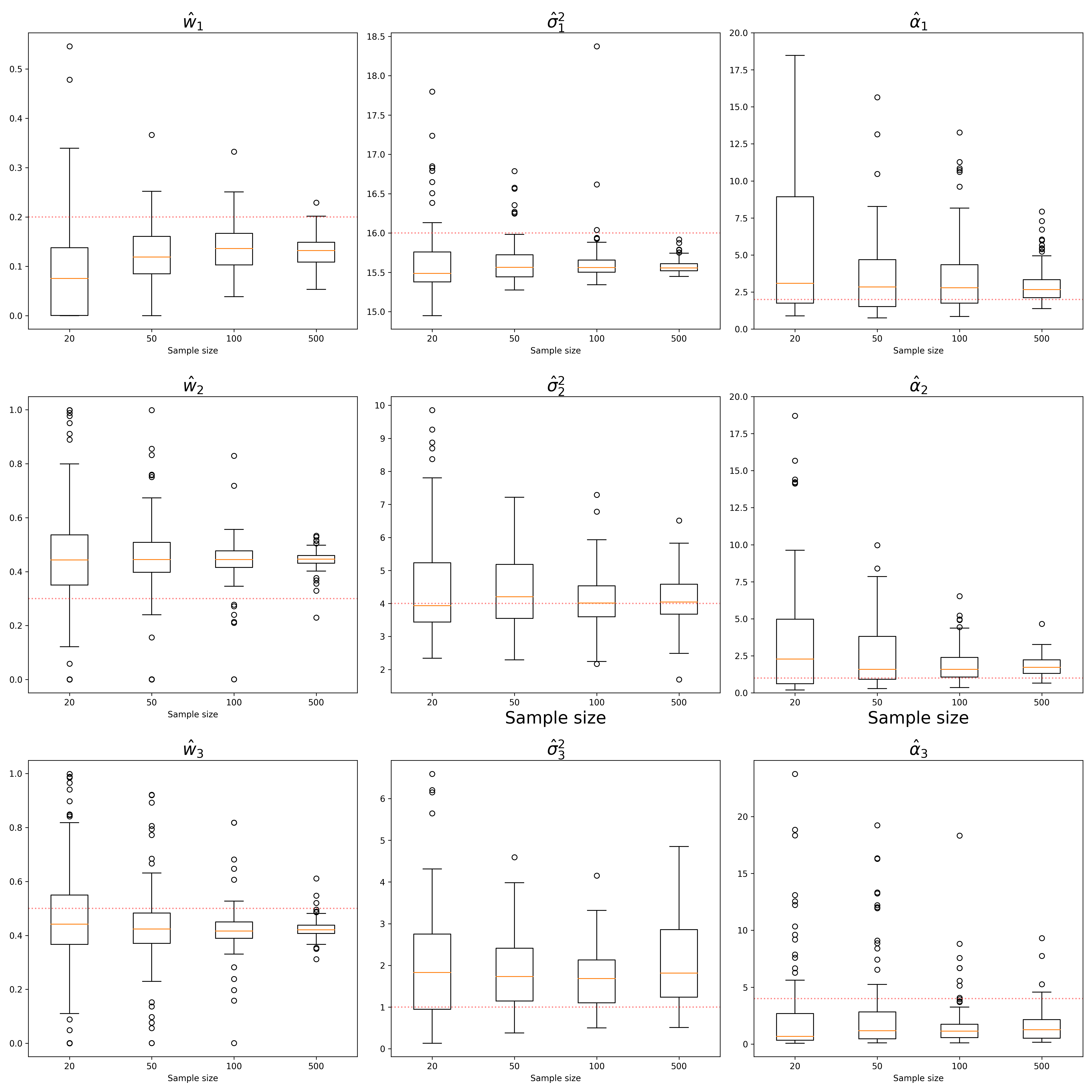}
  \caption{All parameter estimation in simulation 4.}
  \label{sim2-allpara-fig}
\end{figure}

Our results offer compelling evidence that, in the case of a mixture kernel consisting of three Mat\'ern 1/2 kernels, only the MLE of the microergodic parameter $\sum_{l=1}^Lw_l\sigma_l^2\alpha_l^{2\nu}$ converges to the true value (Figure \ref{sim2-fig}A). The MLEs of all other parameters do not demonstrate convergence to their corresponding true values (Figure \ref{sim2-fig}B-D). This result underscores the importance of understanding the identifiability of parameters in such mixture kernels and highlights the need for careful consideration when interpreting the estimated parameters in Gaussian process models. When treating mixture kernel consist of same smoothness, we could not interprete every single model. All parameter estimates are summarised in Figure \ref{sim2-allpara-fig}.

\subsection{Application 1}

In application 1, we focus on image analysis. We employ a hand-written zero from the MNIST dataset (\cite{lecun-mnist}). In this application, we use mixture kernel with Mat\'ern kernels of smoothness $1/2$, $3/2$ and $5/2$ and compare its performance with Mat\'ern $1/2$ kernel. For both kernels, we used the SGD optimizer with learning rate $1e^{-05}$. The training epochs are $40000$. 
During both the training and prediction stages, we introduced a term, $\epsilon$, into the covariance computation to guarantee its positive-definiteness. For both kernels, we set $\epsilon$ to $0.01$. For mixture kernel, the parameters are initialized as $(w_1,w_2,w_3)=(1/3,1/3,1/3)$, $(\sigma^2_1,\sigma^2_2,\sigma^2_3)=(1,1,1)$ and $(\alpha_1,\alpha_2,\alpha_3)=(1,1,1)$.

Here the estimated parameters for mixture kernel are $(w_1,w_2,w_3)=(9.9954e^{-01},2.3194e^{-04},2.3194e^{-04})$, $(\sigma^2_1,\sigma^2_2,\sigma^2_3)=(1.2148e^{+02},4.9960e^{+01},4.9256e^{+01})$, $(\alpha_1,\alpha_2,\alpha_3)=(1.3941e^{-01},1.1203e^{-02},1.1020e^{-02})$. The estimated parameters for Mat\'ern 1/2 kernel are $\alpha=3.0639e^{-02}$,$\sigma^2=2.7927e^{+02}$.

\subsection{Application 2}

In application 2, we focus on the Moana Loa CO$_2$ dataset (\cite{mauna_loa}). In this application, we compare the performance of Mat\'ern mixture $1/2+3/2,~1/2+3/2+5/2,~3/2+5/2$ and three single Mat\'ern kernels ($1/2,~3/2,~5/2$). We use sklearn package (\cite{scikit-learn}) for this analysis. The the parameters are initialized as $(\sigma^2_1,\sigma^2_2,\sigma^2_3)=(10,500,500)$ and $(\alpha_1,\alpha_2,\alpha_3)=(4,0.1,0.1)$ for both single kernel and mixture kernel. Other parameters remain default in sklearn.

\end{document}